\def\eqref#1{equation~\ref{#1}}
\def\1{\bm{1}}
\def\rd{{\rm d}}
\def\mR{{\bm{R}}}
\DeclareMathAlphabet{\mathsfit}{\encodingdefault}{\sfdefault}{m}{sl}
\SetMathAlphabet{\mathsfit}{bold}{\encodingdefault}{\sfdefault}{bx}{n}
\newcommand{\E}{\mathbb{E}}
\newcommand{\R}{\mathbb{R}}
\newcommand{\KL}{D_{\mathrm{KL}}}
\DeclareMathOperator*{\argmin}{arg\,min}
\newcommand\norm[1]{\left\lVert#1\right\rVert}
\newcommand{\bI}{\mathbf I}
\newcommand{\bff}{\mathbf f}
\newcommand{\bg}{\mathbf g}
\newcommand{\bu}{\mathbf u}
\newcommand{\bx}{\mathbf x}
\newcommand{\by}{\mathbf y}
\newcommand{\bw}{\mathbf w}
\newcommand{\bz}{\mathbf z}
\newcommand{\Bigo}{\mathcal{O}}
\newcommand{\cP}{{\mathcal P}}
\newcommand{\cQ}{{\mathcal Q}}
\newlength\myindent
\newcommand\bindent{%
  \begingroup
  \setlength{\itemindent}{\myindent}
  \addtolength{\algorithmicindent}{\myindent}
}
\newcommand\eindent{\endgroup}
\newcommand{\zqs}[1]{\textcolor{black}{#1}}
\newcommand{\comment}[1]{\textcolor{gray}{#1}}
\newcommand\redsout{\bgroup\markoverwith{\textcolor{red}{\rule[0.5ex]{2pt}{0.4pt}}}\ULon}
\crefname{section}{Section}{Sections}
\crefname{equation}{eq}{eq}
\Crefname{equation}{Eq}{Eq}
\crefname{figure}{Fig}{Fig}
\crefname{algorithm}{Algorithm}{Algorithm}
\Crefname{algorithm}{Algorithm}{Algorithm}
\crefname{table}{Tab}{Tab}
\Crefname{tabel}{Tab}{Tab}
\crefname{thm}{Theorem}{Theorems}
\Crefname{thm}{Theorem}{Theorems}
\crefname{condition}{Condition}{Conditions}
\Crefname{condition}{Condition}{Conditions}
\newtheorem{thm}{Theorem}
\newtheorem{lemma}{Lemma}[thm]
\newtheorem{cor}{Corollary}
\newtheorem{condition}{Condition}
\title{Path Integral Sampler: \\
a stochastic control approach for sampling
}
\author{Qinsheng Zhang \\
Center for Machine Learning\\
Georgia Institute of Technology\\
\texttt{qzhang419@gatech.edu} \\
\And
Yongxin Chen \\
School of Aerospace Engineering \\
Georgia Institute of Technology\\
\texttt{yongchen@gatech.edu} \\
}
\begin{document}

\maketitle

\begin{abstract}
    We present Path Integral Sampler~(PIS), a novel algorithm to draw samples from unnormalized probability density functions. The PIS is built on the Schr\"odinger bridge problem which aims to recover the most likely evolution of a diffusion process given its initial distribution and terminal distribution. The PIS draws samples from the initial distribution and then propagates the samples through the Schr\"odinger bridge to reach the terminal distribution. Applying the Girsanov theorem, with a simple prior diffusion, we formulate the PIS as a stochastic optimal control problem whose running cost is the control energy and terminal cost is chosen according to the target distribution. By modeling the control as a neural network, we establish a sampling algorithm that can be trained end-to-end. We provide theoretical justification of the sampling quality of PIS in terms of Wasserstein distance when sub-optimal control is used. Moreover, the path integrals theory is used to compute importance weights of the samples to compensate for the bias induced by the sub-optimality of the controller and time-discretization. We experimentally demonstrate the advantages of PIS compared with other start-of-the-art sampling methods on a variety of tasks.
\end{abstract}

\section{Introduction}

We are interested in drawing samples from a target density $\hat{\mu}=Z\mu$ known up to a normalizing constant $Z$.
Although it has been widely studied in machine learning and statistics, generating asymptotically unbiased samples from such unnormalized distribution can still be challenging~\citep{talwar2019computational}.
In practice, variational inference~(VI) and Monte Carlo~(MC) methods are two popular frameworks for sampling. 

Variational inference employs a density model $q$, from which samples are easy and efficient to draw, to approximate the target density~\citep{RezMoh15, WuKohNoe20}.
Two important ingredients for variational inference sampling include a distance metric between $q$ and $\hat{\mu}$ to identify good $q$ and the importance weight to account for the mismatch between the two distributions. Thus, in variational inference, one needs to access the explicit density of $q$, which restricts the possible parameterization of $q$.
Indeed, explicit density models that provide samples and probability density such as Autoregressive models and normalizing flow are widely used in density estimation~\citep{ gao2020flow, nicoli2020asymptotically}.
However, such models impose special structural constraints on the representation of $q$. For instance, the expressive power of normalizing flows~\citep{RezMoh15} is constrained by the requirements that the induced map has to be bijective and its Jacobian needs to be easy-to-compute~\citep{CorCateDel20, GraCheBet18, zhang2021diffusion}.

Most MC methods generate samples by iteratively simulating a well-designed Markov chain~(MCMC) or sampling ancestrally~\citep{mackay2003information}. Among them, Sequential Monte Carlo and its variants augmented with annealing trick are regarded as state-of-the-art in certain sampling tasks~\citep{del2006sequential}.
Despite its popularity, MCMC methods may suffer from long mixing time. The short-run performance of MCMC can be difficult to analyze and samples often get stuck in local minima~\citep{nijkamp2019learning, gao2020learning}. 
There are some recent works exploring the possibility of incorporating neural networks to improve MCMC~\citep{spanbauer2020deep,li2020neural}.
However, evaluating existing MCMC empirically, not to say designing an objective loss function to train network-powered MCMC, is difficult~\citep{liu2016kernelized, gorham2017measuring}. 
Most existing works in this direction focus only on designing data-aware proposals~\citep{song2017nice, titsias2019gradient} and training such networks can be challenging without expertise knowledge in sampling.

In this work, we propose an efficient sampler termed Path Integral Sampler~(PIS) to generate samples by simulating a stochastic differential equation~(SDE) in finite steps. Our algorithm is built on the Schr\"odinger bridge problem~\citep{pavon1989stochastic, dai1991stochastic,Leo14,CheGeoPav21} whose original goal was to infer the most likely evolution of a diffusion given its marginal distributions at two time points. With a proper prior diffusion model, this Schr\"odinger bridge framework can be adopted for the sampling task. Moreover, it can be reformulated as a stochastic control problem \citep{CheGeoPav16} whose terminal cost depends on the target density $\hat \mu$ so that the diffusion under optimal control has terminal distribution $\hat\mu$. We model the control policy with a network and develop a method to train it gradually and efficiently.
The discrepancy of the learned policy from the optimal policy also provides an evaluation metric for sampling performance.
Furthermore, PIS can be made unbiased even with sub-optimal control policy via the path integral theorem to compute the importance weights of samples. 
Compared with VI that uses explicit density models, PIS uses an implicit model and has the advantage of free-form network design. 
The explicit density models have weaker expressive power and flexibility compared with implicit models, both theoretically and empirically~\citep{CorCateDel20, CheBehDuv19,KinWel13, mohamed2016learning}.
Compared with MCMC, PIS is more efficient and is able to generate high-quality samples with fewer steps. Besides, the behavior of MCMC over finite steps can be analyzed and quantified.
We provide explicit sampling quality guarantee in terms of Wasserstein distance to the target density for any given sub-optimal policy.

Our algorithm is based on \citet{tzen2019theoretical}, where the authors establish the connections between generative models with latent diffusion and stochastic control and justify the expressiveness of such models theoretically. How to realize this model with networks and how the method performs on real datasets are unclear in \citet{tzen2019theoretical}.
Another closely related work is~\citet{WuKohNoe20, arbel2021annealed}, which extends Sequential Monte Carlo~(SMC) by combining deterministic normalizing flow blocks with stochastic MCMC blocks. 
To be able to evaluate the importance weights efficiently, MCMC blocks need to be chosen based on annealed target distributions carefully. In contrast, in PIS one can design expressive architecture freely and train the model end-to-end without the burden of tuning MCMC kernels, resampling or annealing scheduling. 
We summarize our contributions as follows. 1). We propose Path Integral Sampler, a generic sampler that generates samples through simulating a target-dependent SDE which can be trained with free-form architecture network design. We derive performance guarantee in terms of the Wasserstein distance to the target density based on the optimality of the learned SDE. 
2). An evaluation metric is provided to quantify the performance of learned PIS. By minimizing such evaluation metric, PIS can be trained end-to-end. This metric also provides an estimation of the normalization constants of target distributions. 
3). PIS can generate samples without bias even with sub-optimal SDEs by assigning importance weights using path integral theory.
4). Empirically, PIS achieves the state-of-the-art sampling performance in several sampling tasks.
\begin{figure}
    \centering
    \includegraphics[width=\textwidth, height=4.5cm]{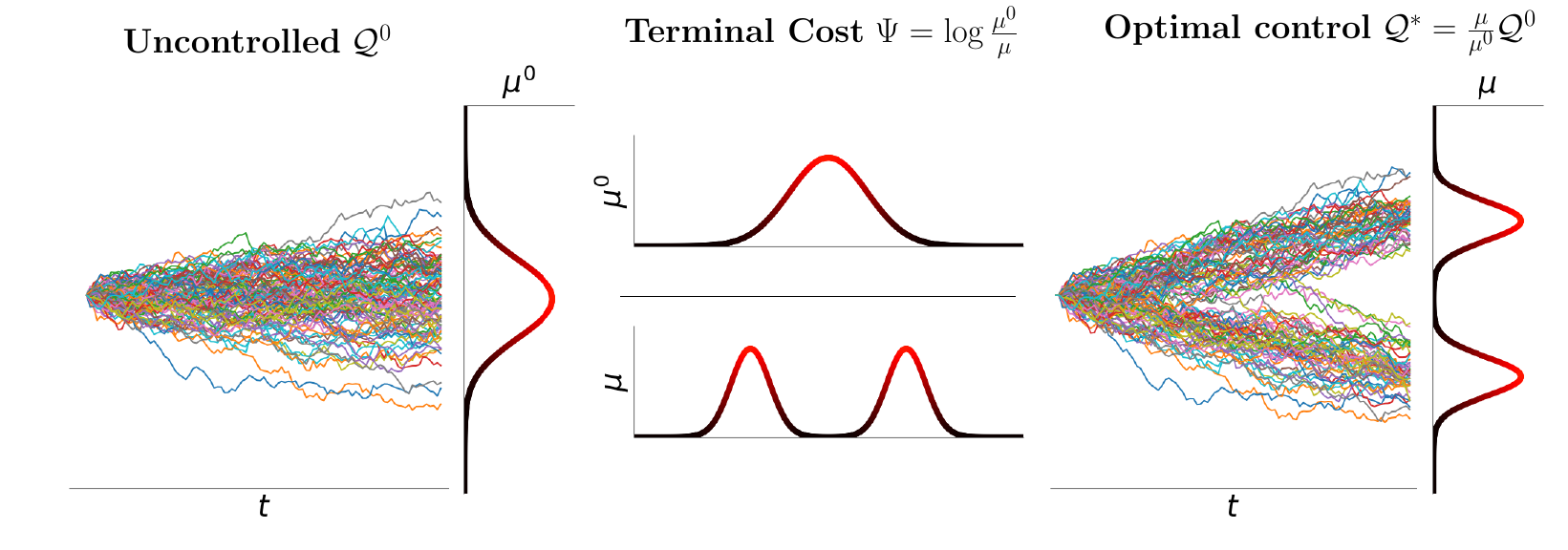}
    \vspace{-5mm}
    \caption{Illustration of Path Integral Sampler~(PIS). 
        The optimal policy of a specific stochastic control problem where a terminal cost function is chosen according to the given target density $\mu$, can generate unbiased samples over a finite time horizon.}
    \label{fig:pis_1d_app}
    \vspace{-0.5cm}
\end{figure}

\section{Sampling and stochastic control problems} \label{sec:diffusion-models}
We begin with a brief introduction to the sampling problem and the stochastic control problem.
Throughout, we denote by $\tau = \{\bx_t, 0\leq t \leq T\}$ a continuous-time stochastic trajectory. 

\subsection{Sampling problems}
We are interested in drawing samples from a target distribution $\mu(\bx)=\hat{\mu}(\bx)/Z$ in $\mR^d$ where $Z$ is the normalization constant. 
Many sampling algorithms rely on constructing a stochastic process that drives the random particles from an initial distribution $\nu$ that is easy to sample from, to the target distribution $\mu$.

In the variational inference framework, one seeks to construct a parameterized stochastic process to achieve this goal. Denote by $\Omega=C([0,T]; \mR^d)$ the path space consisting of all possible trajectories and by $\cP$ the measure over $\Omega$ induced by a stochastic process with terminal distribution $\mu$ at time $T$. Let $\cQ$ be the measure induced by a parameterized stochastic and denote its marginal distribution at $T$ by $\mu^{\cQ}$. Then, by the data processing inequality, the Kullback-Leibler divergence~(KL) between marginal distributions $\mu^Q$ and $\mu$ can be bounded by
\begin{equation}
    \label{eq:kl-bound}
    \KL(\mu^{\cQ} \Vert \mu) \leq \KL(\cQ \Vert \cP):=\int_{\Omega} \rd\cQ \log \frac{\rd\cQ}{\rd\cP}.
\end{equation}
Thus, $\KL(\cQ \Vert \cP)$ serves as a performance metric for the sampler, and a small $\KL(\cQ \Vert \cP)$ value corresponds to a good sampler.

\subsection{Stochastic control}\label{sec:control}
Consider a model characterized by a special stochastic differential equation~(SDE)~\citep{sarkka2019applied}
\begin{equation}\label{eq:sde}
    \rd\bx_t =  \bu_t \rd t + \rd\bw_t,~ \bx_0 \sim \nu,
\end{equation}
where 
$\bx_t, \bu_t$ denote state and control input respectively, and $\bw_t$ denotes standard Brownian motion. 
In stochastic control, the goal is to find an feedback control strategy that minimizes a certain given cost function. 

The standard stochastic control problem can be associated with any cost and any dynamics. In this work, we only consider cost of the form
\begin{equation}
    \label{eq:cost-fn}
    \E\left[\int_0^T \frac{1}{2}\norm{\bu_t}^2\rd t + \Psi(\bx_T)\mid \bx_0 \sim \nu\right],
\end{equation}
where $\Psi$ represents the terminal cost. The corresponding optimal control problem can be solved via dynamic programming~\citep{bertsekas2000dynamic}, which amounts to solving the Hamilton-Jacobi-Bellman~(HJB) equation~\citep{evans1998partial}
\begin{equation}
    \label{eq:hjb}
    \frac{\partial V_t}{\partial t} -\frac{1}{2}\nabla V_t'\nabla V_t+ \frac{1}{2}\Delta V_t=0,~~ V_T(\cdot) = \Psi(\cdot).
\end{equation}
The space-time function $V_t(\bx)$ is known as {\it cost-to-go} function or {\it value function}. The optimal policy can be computed from $V_t(\bx)$ as~\citep{pavon1989stochastic} 
\begin{equation}
    \label{eq:opt-u-from-v}
    \bu^*_t(\bx) = - \nabla V_t(\bx).
\end{equation}

\section{Path Integral Sampler}\label{sec:pis}
It turns out that, with a proper choice of initial distribution $\nu$ and terminal loss function $\Psi$, the stochastic control problem coincides with sampling problem, and the optimal policy drives samples from $\nu$ to $\mu$ perfectly. 
The process under optimal control can be viewed as the posterior of uncontrolled dynamics conditioned on target distribution as illustrated in \cref{fig:pis_1d_app}. Throughout, we denote by $\cQ^u$ the path measure associated with control policy $\bu$. We also denote by $\mu^0$ the terminal distribution of the uncontrolled process $\cQ^0$. 
For the ease of presentation, we begin with sampling from a normalized density $\mu$, and then generalize the results to unnormalized $\hat{\mu}$ in \cref{ssec:important-weight}.

\subsection{Path Integral and value function}
Thanks to the special cost structure, the nonlinear HJB~\cref{eq:hjb} can be transformed into a linear partial differential equation~(PDE)
\begin{equation} \label{eq:log-linear-hjb}
    \frac{\partial \phi_t}{\partial t} + \frac{1}{2}\Delta \phi_t = 0,~~ \phi_T(\cdot) = \exp\{-\Psi(\cdot)\}
\end{equation}
by logarithmic transformation~\citep{sarkka2019applied} $V_t(\bx) = -\log \phi_t(\bx)$.
By the celebrated Feynman-Kac formula~\citep{oksendal2003stochastic}, the above has solution
\begin{equation}\label{eq:pi-phi}
    \phi_t(\bx) = \E_{\cQ^0}[\exp(-\Psi(\bx_T))|\bx_t=\bx].
\end{equation}
We remark that \cref{eq:pi-phi} implies that the optimal value function can be evaluated without knowing the optimal policy since the above expectation is with respect to the uncontrolled process $\cQ^0$. 
This is exactly the Path Integral control theory~\citep{TheBucSch10,TheTod12,thijssen2015path}.
Furthermore, the optimal control at $(t,\bx)$ is
\begin{equation}\label{eq:pi-u}
    \bu_t^*(\bx) = \nabla \log \phi_t(\bx) = \lim_{s\searrow t} \frac{\E_{\cQ^0}\{\exp\{ - \Psi(\bx_T)\}\int_t^s d\bw_t\mid \bx_t = \bx\}}{(s-t)\E_{\cQ^0}\{\exp\{- \Psi(\bx_T)\}\mid \bx_t = \bx\}},
\end{equation}
meaning that $\bu_t^*(\bx)$ can also be estimated by uncontrolled trajectories.

\subsection{Sampling as a stochastic optimal control problem}\label{ssec:reformulation}
There are infinite choices of control strategy $\bu$ such that \cref{eq:sde} has terminal distribution $\mu$. We are interested in the one that minimizes the KL divergence to the prior uncontrolled process. This is exactly the Schr\"odinger bridge problem~\citep{pavon1989stochastic, dai1991stochastic,CheGeoPav16,CheGeoPav21}, which has been shown to have a stochastic control formulation with cost being control efforts. In cases where $\nu$ is a Dirac distribution, it is the same as the stochastic control problem in \cref{sec:control} with a proper terminal cost as characterized in the following result \citep{tzen2019theoretical}.

\begin{thm}[Proof in \cref{app:thm-coincide}]\label{thm:coincide}
    When $\nu$ is a Dirac distribution and terminal loss is chosen as $\Psi(\bx_T) = \log \frac{\mu^0(\bx_T)}{\mu(\bx_T)}$, the distribution $\cQ^*$ induced by the optimal control policy is 
    \begin{equation}\label{eq:eq-cond-optim}
        \cQ^*(\tau)=\cQ^0(\tau|\bx_T)\mu(\bx_T).
    \end{equation}
    Moreover, $\cQ^*(\bx_T) = \mu(\bx_T)$.
\end{thm}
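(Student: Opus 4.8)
The plan is to identify the Radon--Nikodym derivative $d\cQ^*/d\cQ^0$ explicitly and show that it equals $\exp(-\Psi(\bx_T)) = \mu(\bx_T)/\mu^0(\bx_T)$, after which the claimed factorization follows immediately by disintegrating $\cQ^0$ along its terminal marginal $\mu^0$. First I would invoke the Girsanov theorem to express the density of the optimally controlled path measure against the uncontrolled one. Since the optimal policy enters \cref{eq:sde} as the extra drift $\bg\bu^*$, Girsanov gives
\begin{equation*}
    \frac{d\cQ^*}{d\cQ^0}(\tau) = \exp\left(\int_0^T (\bu^*_t)'\,d\bw_t - \frac{1}{2}\int_0^T\norm{\bu^*_t}^2\,dt\right),
\end{equation*}
where $\bw$ is the Brownian motion driving $\cQ^0$; I would note in passing that a Novikov-type integrability condition is assumed so that this change of measure is legitimate.

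The key step is to recognize that the exponent telescopes. Applying It\^o's formula to $\log\phi_t(\bx_t)$ along the uncontrolled dynamics $d\bx_t = \bff\,dt + \bg\,d\bw_t$, the first-order-in-time drift terms cancel exactly by the linear PDE \cref{eq:log-linear-hjb}, and after substituting the optimal policy $\bu^*_t = \bg'\nabla\phi_t/\phi_t$ the surviving It\^o correction recombines to yield
\begin{equation*}
    d\log\phi_t(\bx_t) = -\frac{1}{2}\norm{\bu^*_t}^2\,dt + (\bu^*_t)'\,d\bw_t .
\end{equation*}
Integrating over $[0,T]$ shows that the Girsanov exponent equals $\log\phi_T(\bx_T) - \log\phi_0(\bx_0)$, hence $d\cQ^*/d\cQ^0(\tau) = \phi_T(\bx_T)/\phi_0(\bx_0)$.

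It then remains to evaluate the two factors. The terminal value is $\phi_T(\bx_T) = \exp(-\Psi(\bx_T)) = \mu(\bx_T)/\mu^0(\bx_T)$ by the boundary condition. For the denominator, the Dirac assumption on $\nu$ makes $\bx_0$ deterministic, so by the Feynman--Kac representation \cref{eq:pi-phi} we have $\phi_0(\bx_0) = \E_{\cQ^0}[\mu(\bx_T)/\mu^0(\bx_T)]$; since $\mu^0$ is precisely the terminal marginal of $\cQ^0$, this expectation integrates $\mu$ against Lebesgue measure and equals $1$ because $\mu$ is a normalized density. Therefore $d\cQ^*/d\cQ^0(\tau) = \mu(\bx_T)/\mu^0(\bx_T)$, and writing $\cQ^0(\tau) = \cQ^0(\tau\mid\bx_T)\,\mu^0(\bx_T)$ gives $\cQ^*(\tau) = \cQ^0(\tau\mid\bx_T)\,\mu(\bx_T)$, which is \cref{eq:eq-cond-optim}. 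Finally, marginalizing over the paths with a fixed endpoint and using $\int \cQ^0(\tau\mid\bx_T)\,d\tau = 1$ yields $\cQ^*(\bx_T) = \mu(\bx_T)$.

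The main obstacle I anticipate is the It\^o computation together with the cancellation through the linear PDE: getting the drift terms to vanish and verifying that the surviving quadratic piece matches $-\tfrac{1}{2}\norm{\bu^*}^2$ is where all the structure resides, while the rest is bookkeeping. A secondary subtlety is the identity $\phi_0(\bx_0)=1$, which genuinely relies on both the Dirac initial condition and the normalization of $\mu$, along with the integrability conditions needed to justify the Girsanov change of measure.
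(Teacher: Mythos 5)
Your proof is correct, but it reaches the conclusion by a genuinely different route than the paper. The paper's proof in \cref{app:thm-coincide} is variational: it cites \cref{lemma:optimal-trans} for the $h$-transform structure of the optimally controlled transition densities, computes $V_0(\bar{\bx}_0)=-\log\phi_0(\bar{\bx}_0)=0$ (the same normalization step as your $\phi_0(\bx_0)=1$), and then concludes by observing that $\KL(\cQ^*(\tau)\,\Vert\,\cQ^0(\tau|\bx_T)\mu(\bx_T))$ equals the optimal cost, i.e.\ the value function at the start, which is zero --- so the two path measures coincide without ever writing down $d\cQ^*/d\cQ^0$ explicitly. You instead compute that Radon--Nikodym derivative directly: Girsanov plus It\^o's formula on $\log\phi_t(\bx_t)$, with the drift killed by the linear PDE \cref{eq:log-linear-hjb} and the It\^o correction recombining into $-\tfrac12\norm{\bu^*_t}^2$, gives $d\cQ^*/d\cQ^0=\phi_T(\bx_T)/\phi_0(\bx_0)$. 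In effect you re-derive the path-space version of \cref{lemma:optimal-trans} rather than citing it, which makes your argument more self-contained (modulo taking the formula $\bu^*=\bg'\nabla\log\phi$ as given) and also hands you the importance-weight identity of \cref{app:proof-weights} for free; the paper's route is shorter because it leans on the cost identity \cref{eq:kl-cost} and the external lemma, and it avoids the stochastic-calculus bookkeeping entirely. Both hinge on the same two ingredients --- the Feynman--Kac representation \cref{eq:pi-phi} with the Dirac initial condition forcing $\phi_0(\bx_0)=\int\mu=1$, and the boundary condition $\phi_T=e^{-\Psi}=\mu/\mu^0$ --- so the essential content is shared even though the mechanism differs. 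Your closing caveats (Novikov-type integrability, and the fact that the normalization genuinely needs both the Dirac start and $\int\mu=1$) are well placed; the paper glosses over the former as well.
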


To gain more insight, consider the KL divergence
\begin{equation} \label{eq:pure-kl-traj}
    \KL(\cQ^u(\tau) \Vert \cQ^0(\tau|\bx_T)\mu(\bx_T)) 
    = \KL(\cQ^u(\tau) \Vert \cQ^0(\tau)\frac{\mu(\bx_T)}{\mu^0(\bx_T)}) 
    = \KL(\cQ^u \Vert \cQ^0) + \E_{\cQ^u}[\log \frac{\mu^0}{\mu}].
\end{equation}
Thanks to the Girsanov theorem~\citep{sarkka2019applied}, 
\begin{equation}\label{eq:girsanov}
    \frac{\rd\cQ^u}{\rd\cQ^0} = \exp(\int_{0}^T \frac{1}{2}\norm{\bu_t}^2\rd t + \bu_t'\rd\bw_t).
\end{equation}
It follows that
\begin{equation}\label{eq:kl-girsanov}
    \KL(\cQ^u \Vert \cQ^0) = \E_{\cQ^u}[\int_0^T \frac{1}{2}\norm{\bu_t}^2 \rd t].
\end{equation}
Plugging \cref{eq:kl-girsanov} into \cref{eq:pure-kl-traj} yields 
\begin{equation}\label{eq:kl-cost}
    \KL(\cQ^u(\tau) \Vert \cQ^0(\tau|\bx_T)\mu(\bx_T)) = \E_{\cQ^u}[\int_0^T \frac{1}{2}\norm{\bu_t}^2 \rd t + \log \frac{\mu^0(\bx_T)}{\mu(\bx_T)}], 
\end{equation}
which is exactly the cost defined in \cref{eq:cost-fn} with $\Psi=\log \frac{\mu^0}{\mu}$. \Cref{thm:coincide} implies that once the optimal control policy that minimizes this cost is found, it can also drive particles from $\bx_0 \sim \nu$ to $\bx_T \sim \mu$.

\subsection{Optimal control policy and sampler}\label{ssec:pis}

\textbf{Optimal Policy Representation:} 
Consider the sampling strategy from a given target density by simulating SDE in \cref{eq:sde} under optimal control. Even though the optimal policy is characterized by \cref{eq:pi-u}, only in rare case (Gaussian target distribution) it has an analytic closed-form.

For more general target distributions, we can instead evaluate the value function \cref{eq:pi-phi} via empirical samples using Monte Carlo. The approach is essentially importance sampling whose proposal distribution is the uncontrolled dynamics. However, this approach has two drawbacks. First, it is known that the estimation variance can be intolerably high when the proposal distribution is not close enough to the target distribution~\citep{mackay2003information}. 
Second, even if the variance is acceptable, without a good proposal, the required samples size increases exponentially with dimension, which prevents the algorithm from being used in high or even medium dimension settings~\citep{neal2001annealed}.

To overcome the above shortcomings, we parameterize the control policy with a neural network $\bu_{\theta}$.
We seek a control policy that minimizes the cost
\begin{equation}\label{eq:training-obj}
    \bu^* = \argmin_{\bu} \E_{\cQ^u}\left[\int_0^T \frac{1}{2}\norm{\bu_t}^2\rd t + \log \frac{\mu^0(\bx_T)}{\mu(\bx_T)}\right]. 
\end{equation}
The formula \cref{eq:training-obj} also serves as distance metric between $\bu_{\theta}$ and $\bu^*$ as in \cref{eq:kl-cost}. 

\textbf{Gradient-informed Policy Representation:}
It is believed that proper prior information can significantly boost the performance of neural network~\citep{goodfellow2016deep}. The score $\nabla \log \mu(\bx)$ has been used widely to improve the proposal distribution in MCMC~\citep{li2020neural,hoffman2014no} and often leads to better results compared with proposals without gradient information. In the same spirit, we incorporate $\nabla \log \mu(\bx)$ and parameterize the policy as
\begin{equation}\label{eq:pis-grad}
    \bu_t(\bx) = \text{NN}_1(t, \bx) + \text{NN}_2(t) \times \nabla \log \mu(\bx),
\end{equation}
where $\text{NN}_1$ and $\text{NN}_2$ are two neural networks. 
Empirically, we also found that the gradient information leads to faster convergence and smaller discrepancy $\KL(\cQ^u \Vert \cQ^*)$. We remark that PIS with policy \cref{eq:pis-grad} can be viewed as a modulated Langevin dynamics~\citep{mackay2003information} that achieves $\mu$ within finite time $T$ instead of infinite time.

\textbf{Optimize Policy:} 
Optimizing $\bu_{\theta}$ requires the gradient of loss in \cref{eq:training-obj}, which involves $\bu_t$ and the terminal state $\bx_T$. 
To calculate gradients, we rely on backpropagation through trajectories. 
We train the control policy with recent techniques of Neural SDEs~\citep{li2020scalable,kidger2021efficient}, which greatly reduce memory consumption during training.
The gradient computation for Neural SDE is based on stochastic adjoint sensitivity, which generalizes the adjoint sensitivity method for Neural ODE \citep{chen2018neural}. Therefore, the backpropagation in Neural SDE is another SDE associated with adjoint states. Unlike the training of traditional deep MLPs which often runs into gradient vanishing/exploding issues, the training of Neural SDE/ODE is more stable and not sensitive the number of discretization steps \citep{chen2018neural,kidger2021efficient}.
We augment the origin SDE with state $\int_0^t \frac{1}{2}\norm{\bu_s}^2 \rd s$ such that the whole training can be conducted end to end. The full training procedure in provided in \cref{alg:train}. 
\begin{algorithm}[t]
    \caption{Training}
    \begin{algorithmic}
    \label{alg:train}
    \STATE \zqs{\textbf{Input:} $\text{Vector: }\bx_0 = 0, \text{Scalar: } y_0=0$}
    \STATE \zqs{\textbf{Output:} $\bu_t(\bx)$ parameterized by $\theta$}
    \STATE \textbf{Define: } SDE drift $\bff(t, [\bx_t, y_t]) = [\bu_{\theta t}(\bx_t), \frac{1}{2}\norm{\bu_{\theta t}(\bx_t)}^2 ]$, diffusion $\bg(t, [\bx_t, y_t]) = [1, 0]$
    \STATE \zqs{\textbf{loop} epoches}
    \bindent
    \STATE $\bx_T, y_T = \text{sdeint}(\bff, \bg, [\bx_0, y_0], [0,T])$ $\quad$ \comment{\# Integrate SDE from 0 to $T$ with Neural SDE}
    \STATE \text{Gradient descent step }$\nabla_\theta [ y_T + \log \frac{\mu^{0}(\bx_T)}{\mu(\bx_T)}]$ $\quad$ \comment{\# Optimize control policy}
    \eindent
    \STATE \zqs{\textbf{done}}
    \end{algorithmic}
\end{algorithm}

\textbf{Wasserstein distance bound:} 
The PIS trained by \cref{alg:train} can not generate unbiased samples from the target distribution $\mu$ for two reasons. First, due to the non-convexity of networks and randomness of stochastic gradient descent, there is no guarantee that the learned policy is optimal. Second, even if the learned policy is optimal, the time-discretization error in simulating SDEs is inevitable. Fortunately, the following theorem quantifies the Wasserstein distance between the sampler and the target density.\zqs{~(More details and a formal statement can be found in \cref{app:proof-nona}) }
\begin{thm}[\zqs{Informal}]\label{thm:nona-bound}
    \zqs{Under mild condition}, with sampling step size $\Delta t$, if $\norm{\bu_{t}^* - \bu_{t}}^2 \leq d \epsilon$ for any $t$, then
    \begin{equation}
        W_2(\cQ^u(\bx_T), \mu(\bx_T)) = \Bigo(\sqrt{T d (\Delta t +\epsilon)}).
    \end{equation}
\end{thm}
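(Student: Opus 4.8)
The plan is to bound the Wasserstein distance by exhibiting an explicit coupling and controlling the expected squared distance it induces, combining a synchronous-coupling energy estimate with Gr\"onwall's inequality. Let $\bx^*_t$ denote the continuous-time trajectory driven by the optimal control $\bu^*$ started at $\bx_0 = 0$; by \cref{thm:coincide} its terminal law is exactly $\mu$. Let $\bx_t$ denote the trajectory actually produced by the sampler, i.e.\ the Euler--Maruyama discretization with step size $\Delta t$ under the learned control $\bu$, started at the same point and --- crucially --- driven by the \emph{same} Brownian motion $\bw_t$. This synchronous coupling is a valid coupling of $\cQ^u(\bx_T)$ and $\mu$, so it suffices to bound $\E\norm{\bx_T - \bx^*_T}^2$, since $W_2^2(\cQ^u(\bx_T),\mu) \le \E\norm{\bx_T - \bx^*_T}^2$.

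Writing $\ve_t = \bx_t - \bx^*_t$ and letting $t_k$ be the grid point preceding $t$, I would apply It\^o's formula to $\norm{\ve_t}^2$. The diffusion contribution is $[\bg(t_k,\bx_{t_k}) - \bg(t,\bx^*_t)]\,d\bw_t$, whose quadratic-variation term is handled by Lipschitz continuity of $\bg$ (and vanishes outright in the simplified setting $\bff = 0,\ \bg = I$ used by the sampler, leaving a pathwise ODE estimate). The drift contribution is $2\langle \ve_t,\ \bff(t_k,\bx_{t_k}) - \bff(t,\bx^*_t) + \bg(t_k,\bx_{t_k})\bu_{t_k}(\bx_{t_k}) - \bg(t,\bx^*_t)\bu^*_t(\bx^*_t)\rangle$, where I split the control part via the telescoping
\begin{equation}
    \bu_{t_k}(\bx_{t_k}) - \bu^*_t(\bx^*_t) = [\bu_{t_k}(\bx_{t_k}) - \bu_t(\bx_t)] + [\bu_t(\bx_t) - \bu^*_t(\bx_t)] + [\bu^*_t(\bx_t) - \bu^*_t(\bx^*_t)].
\end{equation}
The middle term is the control-approximation error, bounded by $\sqrt{d\epsilon}$ by hypothesis; the last term is bounded by $L\norm{\ve_t}$ using Lipschitz regularity of $\bu^*$; and the first term is the one-step discretization error, controlled by the time-and-space regularity of $\bu$ together with $\E\norm{\bx_t - \bx_{t_k}}^2 = \Bigo(d\Delta t)$ (the Brownian increment contributes $\Bigo(d\Delta t)$ and the drift $\Bigo(\Delta t^2)$ over a step). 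Applying Cauchy--Schwarz and Young's inequality to each inner product, then taking expectations, yields a differential inequality of the form $\tfrac{d}{dt}\E\norm{\ve_t}^2 \le C_1 \E\norm{\ve_t}^2 + C_2\, d(\Delta t + \epsilon)$ with $\E\norm{\ve_0}^2 = 0$.

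Integrating this via Gr\"onwall's inequality gives $\E\norm{\ve_T}^2 = \Bigo(T d(\Delta t + \epsilon))$, and taking square roots delivers the claimed bound $W_2(\cQ^u(\bx_T),\mu) = \Bigo(\sqrt{T d(\Delta t + \epsilon)})$. The ``mild conditions'' I expect to need are global Lipschitz continuity of $\bff,\bg,\bu^*$ and joint regularity of the learned $\bu$ in $(t,\bx)$, together with a uniform second-moment bound on $\bu_{t_k}(\bx_{t_k})$ along the trajectory so that the one-step displacement estimate holds; the uniform-in-$t$ assumption $\norm{\bu^*_t - \bu_t}^2 \le d\epsilon$ must hold at the coupled state $\bx_t$, not merely in distribution.

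The step I expect to be the main obstacle is assembling the discretization error cleanly: the term $\bu_{t_k}(\bx_{t_k}) - \bu_t(\bx_t)$ mixes a temporal gap ($t - t_k \le \Delta t$) with a spatial gap $\bx_t - \bx_{t_k}$ that is itself random and correlated with $\ve_t$, so care is needed to keep its contribution at order $d(\Delta t + \epsilon)$ rather than letting cross terms inflate the rate. Relatedly, controlling the Gr\"onwall constant $C_1$ --- which carries the $e^{C_1 T}$ factor absorbed into the $\Bigo$ --- requires the Lipschitz constants to be uniform across the discretization. In the constant-$\bg$ setting actually used by the sampler the noise cancels and $\ve_t$ solves a pathwise ODE, which removes the It\^o quadratic-variation bookkeeping and is the cleanest route to the stated rate.
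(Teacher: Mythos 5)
Your proposal is correct and follows the same skeleton as the paper's proof: a synchronous coupling (both trajectories driven by the same Brownian motion, so the noise cancels in the difference), a decomposition of the drift mismatch into a discretization term, a control-approximation term, and a state-mismatch term, followed by a Gr\"onwall-type recursion and the bound $W_2^2 \le \E\norm{\bx_T-\bx_T^*}^2$. The one substantive difference is where you place the regularity burden. Your three-term telescoping $\bu_{t_k}(\bx_{t_k})-\bu_t(\bx_t)$, $\bu_t(\bx_t)-\bu^*_t(\bx_t)$, $\bu^*_t(\bx_t)-\bu^*_t(\bx^*_t)$ requires joint time--space Lipschitz regularity of the \emph{learned} control $\bu$ to handle the first term, plus a moment bound on $\bu_{t_k}(\bx_{t_k})$ to get $\E\norm{\bx_t-\bx_{t_k}}^2=\Bigo(d\Delta t)$ along the discrete path. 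The paper instead uses a two-term split, comparing $\bu^*_s(\bx^*_s)$ directly to $\bu^*_{t_{k-1}}(\bx_{t_{k-1}})$ and then invoking the hypothesis $\norm{\bu^*_{t_{k-1}}-\bu_{t_{k-1}}}^2\le d\epsilon$ at the coupled discrete state; this way \emph{all} time--space regularity is demanded only of $\bu^*$ (Condition 1, justified by Schr\"odinger--F\"ollmer theory), and the increment moment bound is needed only for the \emph{optimal} trajectory, where it is supplied by an existing lemma (Lemma A.2 of Huang et al.). Since the learned control is an arbitrary neural network, avoiding any regularity or moment assumption on it is a real advantage of the paper's arrangement; your version proves the same rate but under strictly stronger hypotheses. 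You also correctly flag that the $d\epsilon$ bound must hold pointwise at the coupled state rather than in distribution --- the paper's formal statement is indeed of this uniform pointwise type.
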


\subsection{Importance Sampling}\label{ssec:important-weight}
The training procedure for PIS does not guarantee its optimality. 
To compensate for the mismatch between the trained policy and the optimal policy, we introduce importance weight
to calibrate generated samples. 
The importance weight can be calculated by~(more details in \cref{app:proof-weights})
\begin{equation}\label{eq:weights}
    \zqs{w^u(\tau) = \frac{\rd\cQ^*(\tau)}{\rd\cQ^u(\tau)} =  \exp(\int_0^T -\frac{1}{2}\norm{\bu_t}^2 \rd t - \bu_t'\rd\bw_t - \Psi(\bx_T)).}
\end{equation}

\begin{wrapfigure}[10]{R}{0.42\textwidth}
    \vspace{-1.3cm}
\begin{minipage}{0.42\textwidth}
\begin{algorithm}[H]
    \caption{Sampling}
    \begin{algorithmic}
    \label{alg:sampling}
    \STATE \textbf{Input:} $\text{Vector: }\bx_0 = 0, \text{Scalar: } \zqs{y_0=0}$
    \STATE \zqs{\textbf{Output:} Samples with weights}
    \FOR{$i \gets 1$ to $N$}
        \STATE $\Delta t=t_i-t_{i-1}, \Delta \bw \sim \mathcal{N}(0, \Delta t \mathcal{I}), $
        \STATE $\bx_i = \bx_{i-1} + \bu \Delta t + \Delta \bw$
        \STATE $\zqs{y_i = y_{i-1}} + \bu'\Delta \bw + \frac{1}{2}\norm{\bu}^2 \Delta t$
    \ENDFOR
    \STATE \textbf{Outputs: } $\bx_N, \exp(\zqs{-y_N -}\log\frac{\mu^0(\bx_N)}{\mu(\bx_N)})$
    \end{algorithmic}
\end{algorithm}
\end{minipage}
\end{wrapfigure}

We note \cref{eq:weights} resembles training objective \cref{eq:training-obj}. 
Indeed, \cref{eq:training-obj} is the average of logarithm of \cref{eq:weights}.
If the trained policy is optimal, that is, $\cQ^u=\cQ^*$, all the particles share the same weight. We summarize the sampling algorithm in \cref{alg:sampling}.

\textbf{Effective Sample Size:} The Effective Sample Size~(ESS), $\text{ESS}^u = \frac{1}{\E_{\cQ^u}[(w^u)^2]}$, is a popular metric to measure the variance of importance weights.
ESS is often accompanied by resampling trick~\citep{tokdar2010importance} to mitigate deterioration of sample quality. 
ESS is also regarded as a metric for quantifying goodness of sampler based on importance sampling. 
Low ESS means that estimation or downstream tasks based on such sampling methods may suffer from a high variance. 
ESS of most importance samplers is decreasing along the time. 
Thanks to the adaptive control policy in PIS, we can quantify the ESS of PIS based on the optimality of learned policy. For the sake of completeness, the proof in provided in \cref{app:proof-ess}.
\begin{thm}[Corollary 7 \citep{thijssen2015path}]\label{thm:ess}
    If $\max_{t, \bx}\norm{\bu_t(\bx) - \bu_t^*(\bx)}^2 \leq \frac{\epsilon}{T}$, then
    \begin{equation*}
        \frac{1}{\E_{\cQ^u}[(w^u)^2]} \geq 1-\epsilon.
    \end{equation*}
\end{thm}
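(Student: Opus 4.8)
The plan is to reduce the second moment of the importance weight to the exponential of a uniformly bounded integral, exploiting the fact that under the optimal control the weight is identically one. First I would express $w^u$ purely through the deviation $\delta_t := \bu_t - \bu_t^*$. Since $\cQ^* = \cQ^{u^*}$ by \cref{thm:coincide}, the weight $w^{u^*} = d\cQ^*/d\cQ^{u^*}$ equals $1$ along every trajectory, so \cref{eq:weights} (read for the optimal policy, whose driving noise $\bw_t^*$ is the Brownian motion under $\cQ^*$) forces $\Psi(\bx_T) = -\tfrac12\int_0^T \norm{\bu_t^*}^2 dt - \int_0^T (\bu_t^*)' d\bw_t^*$. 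Substituting this identity for $\Psi(\bx_T)$ back into \cref{eq:weights} for the generic policy, and using the Girsanov relation $d\bw_t^* = d\bw_t + \delta_t\, dt$ between the noises of $\cQ^*$ and $\cQ^u$ (with $\bw_t$ the Brownian motion under $\cQ^u$), the terms linear in $\bu_t^*$ cancel and I expect the clean form
\[
    w^u = \exp\Bigl(-\tfrac12\textstyle\int_0^T \norm{\delta_t}^2 dt - \int_0^T \delta_t'\, d\bw_t\Bigr),
\]
which is simply the Radon–Nikodym derivative between two controlled diffusions differing by $\delta_t$ in drift.

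The second step evaluates $\E_{\cQ^u}[(w^u)^2]$. I would use the elementary identity $\E_{\cQ^u}[(w^u)^2] = \E_{\cQ^u}\bigl[(d\cQ^*/d\cQ^u)^2\bigr] = \E_{\cQ^*}[w^u]$ and re-express $w^u$ against the Brownian motion under $\cQ^*$ via the change of variables $d\bw_t = d\bw_t^* - \delta_t\, dt$. This flips the sign of the quadratic-variation term and gives, under $\cQ^*$,
\[
    w^u = \exp\Bigl(\textstyle\int_0^T \norm{\delta_t}^2 dt\Bigr)\cdot \exp\Bigl(-\int_0^T \delta_t'\, d\bw_t^* - \tfrac12\int_0^T \norm{\delta_t}^2 dt\Bigr),
\]
where the second factor is the Doléans–Dade stochastic exponential $\mathcal{E}\bigl(-\int \delta'\, d\bw^*\bigr)_T$.

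The final step is the bound. Because $\norm{\delta_t}^2 \le \epsilon/T$ uniformly in $t$ and $\bx$ by hypothesis, the deterministic prefactor obeys $\exp(\int_0^T \norm{\delta_t}^2 dt) \le e^{\epsilon}$, while the stochastic exponential is a genuine martingale (Novikov's condition holds since $\delta$ is bounded), so $\E_{\cQ^*}[\mathcal{E}(-\int \delta'\, d\bw^*)_T] = 1$. Combining these, $\E_{\cQ^u}[(w^u)^2] = \E_{\cQ^*}[w^u] \le e^{\epsilon}$, and therefore $1/\E_{\cQ^u}[(w^u)^2] \ge e^{-\epsilon} \ge 1-\epsilon$, using the standard inequality $e^{-\epsilon}\ge 1-\epsilon$; this is the claimed bound.

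The main obstacle I anticipate is the bookkeeping in the first step: correctly tracking which Brownian motion each stochastic integral is taken against and applying Girsanov consistently, so that the $\bu^*$-dependent contributions really do cancel and leave only $\delta_t$. A secondary technical point is justifying the martingale property that lets the stochastic exponential integrate to one; the uniform bound on $\norm{\delta_t}$ makes Novikov's condition immediate, but it should be invoked explicitly, along with the (implicitly assumed) invertibility of $\bg$ used in relating the two noise processes.
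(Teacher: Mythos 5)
Your proof is correct, and it takes a genuinely different route from the paper's. The paper's argument is built on the martingale identity of Thijssen and Kappen (their Lemma~4), which represents $w^u - \E_{\cQ^u}[w^u]$ as a stochastic integral of $\Phi(s)\phi_s(\bx_s)(\bu_s^*-\bu_s)$ against the Brownian motion; It\^o isometry then gives an exact expression for the variance of $w^u$, a Jensen bound on $\phi_s^2$ converts it into $\mathrm{Var}(w^u)\le\int_0^T\E_{\cQ^u}[\norm{\bu_s^*-\bu_s}^2(w^u)^2]\,ds\le\epsilon\,\E_{\cQ^u}[(w^u)^2]$, and the claim follows by rearranging. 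You instead write $w^u$ purely in terms of the deviation $\delta_t=\bu_t-\bu_t^*$ (the cancellation of the $\bu^*$-dependent terms does work out exactly as you predict, since $\tfrac12\norm{\bu^*}^2-\tfrac12\norm{\bu}^2+(\bu^*)'(\bu-\bu^*)=-\tfrac12\norm{\delta}^2$, giving $w^u=\exp(-\tfrac12\int_0^T\norm{\delta_t}^2dt-\int_0^T\delta_t'\,d\bw_t)$), convert the second moment into $\E_{\cQ^*}[w^u]$, and split off a Dol\'eans--Dade exponential whose $\cQ^*$-expectation is one by Novikov. Your route avoids the external lemma and the Jensen step entirely, and it yields the sharper intermediate bound $\E_{\cQ^u}[(w^u)^2]\le e^{\epsilon}$ (versus the paper's implicit $\E_{\cQ^u}[(w^u)^2]\le 1/(1-\epsilon)$), from which $1/\E_{\cQ^u}[(w^u)^2]\ge e^{-\epsilon}\ge 1-\epsilon$; it also remains informative when $\epsilon\ge 1$, where the stated bound is vacuous. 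The two technical caveats you flag --- consistent Girsanov bookkeeping between the $\cQ^u$- and $\cQ^*$-Brownian motions, and invertibility of $\bg$ (satisfied here since the paper takes $\bg=I$) --- are exactly the right ones. One small wording correction: the prefactor $\exp(\int_0^T\norm{\delta_t}^2dt)$ is not deterministic, since $\delta_t$ is evaluated along the random trajectory; but the uniform hypothesis $\norm{\delta_t(\bx)}^2\le\epsilon/T$ makes $\int_0^T\norm{\delta_t}^2dt\le\epsilon$ hold pathwise, which is all your argument needs.
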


\textbf{Estimation of normalization constants:} In most sampling problems we only have access to the target density up to a normalization constant, denoted by $\hat{\mu} = Z \mu$. 
PIS can still generate samples following the same protocol with new terminal cost $\hat{\Psi} = \log \frac{\mu^0}{\hat{\mu}}=\Psi - \log Z$. 
The additional constant $-\log Z$ is independent of $\bx_T$ and thus does not affect the optimal policy and the optimization of $\bu_{\theta}$. 
As a byproduct, we can estimate the normalization constants (more details in \cref{app:logz-estimation}).
\begin{thm}\label{thm:logz-estimation}
    For any given policy $\bu$, the logarithm of normalization constant is bounded below by
    \begin{equation}
        \label{eq:logz-estimation}
        \E_{\tau \sim \cQ^u}[-\hat{S}^u(\tau)] \leq \log Z, 
    \end{equation}
    where $\hat{S}^u(\tau) = \int_0^T \frac{1}{2}\norm{\bu_t(\bx_t)}^2 \rd t + \bu_t'(\bx_t) \rd\bw_t + \hat{\Psi}(\bx_T)$. 
The equality holds only when $\bu = \bu^*$. Moreover, for any sub-optimal policy, an unbiased estimation of $Z$ using importance sampling is
    \begin{equation}\label{eq:logz-unbiased}
        Z = \E_{\tau \sim \cQ^u}[\exp(-\hat{S}^u(\tau))].
    \end{equation}
\end{thm}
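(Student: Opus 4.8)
The plan is to reduce both assertions to the importance-weight identity already established in \cref{eq:weights}, so that everything follows from a single change-of-measure computation together with Jensen's inequality. First I would record the effect of replacing the normalized terminal cost $\Psi=\log(\mu^0/\mu)$ by the unnormalized one $\hat\Psi=\log(\mu^0/\hat\mu)=\Psi-\log Z$. Since $\hat{S}^u(\tau)$ differs from the normalized integrand $S^u(\tau):=\int_0^T\frac12\norm{\bu_t}^2dt+\bu_t'd\bw_t+\Psi(\bx_T)$ only by the additive constant $-\log Z$, exponentiating gives
\begin{equation*}
    \exp(-\hat{S}^u(\tau)) = Z\exp(-S^u(\tau)) = Z\,\frac{d\cQ^*}{d\cQ^u}(\tau),
\end{equation*}
where the second equality is exactly the weight formula \cref{eq:weights} with $\cQ^*$ the optimal path measure of \cref{thm:coincide}. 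This rewriting is the whole crux; the rest is bookkeeping.

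For the unbiased estimator \cref{eq:logz-unbiased}, I would take the $\cQ^u$-expectation of the display above and use that integrating a Radon--Nikodym derivative against its base measure recovers the total mass of the numerator:
\begin{equation*}
    \E_{\tau\sim\cQ^u}[\exp(-\hat{S}^u(\tau))] = Z\int_\Omega \frac{d\cQ^*}{d\cQ^u}\,d\cQ^u = Z\int_\Omega d\cQ^* = Z,
\end{equation*}
the final step invoking that $\cQ^*$ is a genuine probability measure, which holds because \cref{thm:coincide} gives $\cQ^*(\bx_T)=\mu(\bx_T)$ with $\mu$ normalized. For the lower bound \cref{eq:logz-estimation} I would then apply Jensen's inequality to the concave map $\log$, using the equality just proved:
\begin{equation*}
    \log Z = \log \E_{\cQ^u}[\exp(-\hat{S}^u)] \ge \E_{\cQ^u}[\log\exp(-\hat{S}^u)] = \E_{\cQ^u}[-\hat{S}^u],
\end{equation*}
which is valid for every admissible policy $\bu$.

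Finally, the equality characterization follows from the equality case of Jensen: equality forces $\exp(-\hat{S}^u(\tau))$ to be $\cQ^u$-almost surely constant, hence $d\cQ^*/d\cQ^u$ constant; two probability measures whose density is constant must coincide, so $\cQ^u=\cQ^*$, and by \cref{thm:coincide} this occurs precisely when $\bu=\bu^*$. The main obstacle is not the algebra but the measure-theoretic hygiene underneath it: I must check that Girsanov (hence \cref{eq:weights}) is applicable --- e.g. a Novikov-type integrability condition on $\bu$ --- that $\exp(-\hat{S}^u)$ is $\cQ^u$-integrable so that Jensen and the expectation exchange are legitimate, and that $\cQ^u$ and $\cQ^*$ are mutually absolutely continuous so the density is a.s. finite and positive and the equality-case argument goes through. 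These are the mild conditions I would state explicitly; under them the three assertions follow immediately.
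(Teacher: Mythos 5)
Your proof is correct and takes essentially the same route as the paper: both rest on the identity $\exp(-\hat{S}^u(\tau))=Z\,\frac{d\cQ^*}{d\cQ^u}(\tau)$ obtained from the Girsanov weight formula together with $\hat\Psi=\Psi-\log Z$, and the lower bound is Jensen's inequality, which the paper states equivalently as non-negativity of $\KL(\cQ^u\Vert\cQ^*)=\E_{\cQ^u}[\hat{S}^u]+\log Z$. The only differences are organizational (you prove the unbiased identity first and deduce the bound from it, while the paper proves the two claims in the opposite order) and your explicit attention to the integrability and mutual absolute-continuity hypotheses, which the paper leaves implicit.
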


\section{Experiments}\label{sec:exp}


In this section we present empirical evaluations of PIS and the comparisons to several baselines. We also provide details of practical implementations. Inspired by \citet{arbel2021annealed}, we conduct experiments for tasks of Bayesian inference and normalization constant estimation. 

We consider three types of relevant methods.
The first category is gradient-guided MCMC methods without the annealing trick. It includes the Hamiltonian Monte Carlo~(HMC)~\citep{mackay2003information} and No-U-Turn Sampler~(NUTS)~\citep{hoffman2014no}. The second is Sequential Monte Carlo with annealing trick~(SMC), which is regarded as state-of-the-art sampling algorithm~\citep{del2006sequential} in terms of sampling quality. We choose a standard instance of SMC samplers and the recently proposed Annealed Flow Transport Monte Carlo~(AFT) \citep{arbel2021annealed}. Both use a default 10 temperature levels with a linear annealing scheme. 
We note that there are optimized SMC variants that achieve better performance~\citep{del2006sequential, chopin2020introduction, zhou2016toward}. Since the introduction of advanced tricks, we exclude the comparison with those variants for fair comparison purpose. We note PIS can also be augmented with annealing trick, possible improvement for PIS can be explored in the future. 
Last, the variational normalizing flow~(VI-NF)~\citep{RezMoh15} is also included for comparison. 
We note that another popular line of sampling algorithms use Stein-Variational Gradient Descent (SVGD) or other particle-based variational inference approaches~\citep{liu2016stein,liu2019understanding}. We include the comparison and more discussions on SGVD in \cref{app:exp-details-stein} due to its significant difference. In our experiments, the number of steps $N$ of MCMC algorithms and the number of SDE time-discretization steps for PIS work as a proxy for benchmarking computation times. 



We also investigate the effects of two different network architectures for Path Integral Sampler. 
The first one is a time-conditioned neural network without any prior information, which we denote as \textit{PIS-NN}, while the second one incorporates the gradient information of the given energy function as in \cref{eq:pis-grad}, denoted as \textit{PIS-Grad}. 
When we have an analytical form for the ground truth optimal policy, the policy is denoted as \textit{PIS-GT}. 
The subscript \textit{RW} is to distinguish PIS with path integral importance weights \cref{eq:weights} that use \cref{eq:logz-unbiased} to estimate normalization constants from the ones without importance weights that use the bound in \cref{eq:logz-estimation} to estimate $Z$.
For approaches without the annealing trick, we take default $N=100$ unless otherwise stated. With annealing, $N$ steps are the default for each temperature level, thus AFT and SMC rougly use 10 times more steps compared with HMC and PIS. We include more details about hyperparameters, training time, sampling efficiency, and more experiments with large $N$ in \cref{app:exp-details,app:tech_tips}.

\subsection{PIS-Grad vs PIS-NN: Importance of gradient guidance}
\begin{figure}
    \centering
    \includegraphics[width=\textwidth]{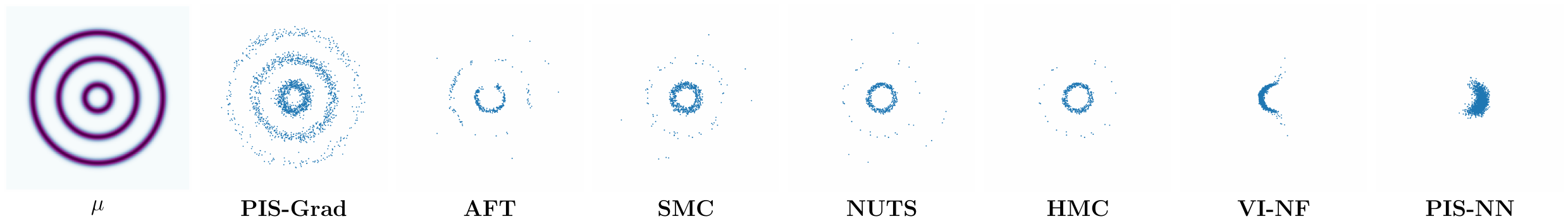}
    \caption{Sampling performance on rings-shape density function with 100 steps. The gradient information can help PIS-Grad and MCMC algorithm improve sampling performance.
}
    \label{fig:ring_cmp_main}
    \vspace{-0.2cm}
\end{figure}
We observed that the advantage of PIS-Grad over PIS-NN is clearer when the target density has multiple modes as in the toy example shown in \cref{fig:ring_cmp_main}. The objective $\KL(\cQ \| \cQ^*)$ is known to  have \textit{zero forcing}. In particular, when the modes of the density are well separated and $\cQ$ is not expressive enough, minimizing $\KL(\cQ \| \cQ^*)$ can drive $\cQ(\tau)$ to zero on some area, even if $\cQ^*(\tau)> 0$~\citep{fox2012tutorial}. 
PIS-NN and VI-NF generate very similar samples that almost cover half the inner ring. The training objective function of VI-NF can also be viewed as minimizing KL divergence between two trajectory distributions~\citep{WuKohNoe20}. The added noise during the process can encourage exploration but it is unlikely such noise only can overcome the local minima. On the other hand, the gradient information can help cover more modes and provide exploring directions. 

\subsection{Benchmarking datasets}
\textbf{Mode-separated mixture of Gaussian:}
We consider the mixture of Gaussian in 2-dimension. 
We notice that when the Gaussian modes are not far away from each other, all methods work well. However, when we reduce the variances of the Gaussian distributions and separate the modes of Gaussian, the advantage of PIS becomes clear even in this low dimension task. 
We generate 2000 samples from each method and plot their kernel density estimate~(KDE) in \cref{fig:demo_mg2d}. PIS generates samples that are visually indistinguishable from the target density.

\textbf{Funnel distribution:} We consider the popular testing distribution in MCMC literature~\citep{hoffman2014no, hoffman2019neutra}, the 10-dimensional Funnel distribution charaterized by
\begin{equation*}
    x_0 \sim \mathcal{N}(0,9), \quad x_{1:9}|x_0 \sim \mathcal{N}(0, \exp(x_0)\bI).
\end{equation*}
This distribution can be pictured as a funnel - with $x_0$ wide at the mouth of funnel, getting smaller as the funnel narrows. 
\begin{center}
    \begin{table}[ht]
    \centering
    \begin{tabular}{p{1.9cm}p{1.0cm}p{0.9cm}p{0.9cm} p{1.0cm}p{0.9cm}p{0.9cm} ccc}
        \hline
        &\multicolumn{3}{c}{MG ($d=2$)}&\multicolumn{3}{c}{Funnel ($d=10$)}&\multicolumn{3}{c}{LGCP ($d=1600$)}\\
        & B & ${\text{S}}$ & $A$ & B & ${\text{S}}$ & $A$ & B & ${\text{S}}$ & $A$\\
        \hline
        PIS$_{RW}$-GT & -0.012 & 0.013 & 0.018 & - & - & - & - & - & -\\
        \hline
        PIS-NN & -1.691 & 0.370  & 1.731 & -0.098 & \textbf{5e-3} & 0.098            &        -92.4&    6.4  &      92.62\\
        PIS-Grad & -0.440 & 0.024 & 0.441 & -0.103 & 9e-3 & 0.104         &        -13.2&    3.21 &      13.58\\
        PIS$_{RW}$-NN & -1.192 & 0.482 & 1.285  & -0.018 & 7e-3& 0.02       &      -60.8 &    4.81 &      60.99\\
        PIS$_{RW}$-Grad & \textbf{-0.021} & \textbf{0.030} & \textbf{0.037} & \textbf{-0.008} & 9e-3 & \textbf{0.012} &      \textbf{-1.94} &    \textbf{0.91} &      \textbf{2.14}\\
        AFT & -0.509 &  0.24 & 0.562 &    -0.208 &     0.193 &     0.284 &      -3.08 &    1.59 &      3.46 \\
        SMC & -0.362 & 0.293 & 0.466 &    -0.216 &     0.157 &     0.267 &      -435 &    14.7 &      436 \\
       NUTS &  -1.871 & 0.527 &  1.943 &    -0.835 &     0.257 &     0.874 & -1.3e3 &    8.01 & 1.3e3 \\
        HMC &  -1.876 & 0.527 &  1.948 &    -0.835 &     0.257 &     0.874 & -1.3e3 &    8.01 & 1.3e3 \\
      VI-NF &  -1.632 & 0.965 &   1.896 &    -0.236 &    0.0591 &     0.243 &     -77.9 &     5.6 &     78.2 \\
        \hline
    \end{tabular}
    \caption{Benchmarking on mode separated mixture of Gaussian~(MG), Funnel distribution and Log Gaussian Cox Process~(LGCP) for estimation log normalization constants. {\it B} and {\it S} stand for estimation bias and standard deviation among 100 runs and {\it A$^2$}= {\it B$^2$} + {\it S$^2$}.}
    \label{tab:bench}
    \end{table}
    \vspace{-1cm}
\end{center}
\textbf{Log Gaussian Cox Process:}
We further investigate the normalization constant estimation problem for the challenging log Gaussian Cox process~(LGCP), which is designed for modeling the positions of Finland pine saplings. In LGCP~\citep{pymc3}, an underlying field $\lambda$ of positive real values is modeled using an exponentially-transformed Gaussian process. Then $\lambda$ is used to parameterize Poisson points process to model locations of pine saplings. The posterior density is 
\begin{equation}
    \lambda(\bx) \sim \exp(-\frac{(\bx-\mu)^T K^{-1}(\bx-\mu)}{2}) \prod_{i \in d}\exp(x_iy_i - \alpha \exp{x_i}),
\end{equation}
where $d$ denotes the size of discretized grid and $y_i$ denotes observation information. The modeling parameters, including normal distribution and $\alpha$, follow~\citet{arbel2021annealed}~(See ~\cref{app:exp-details}).

\cref{tab:bench} clearly shows the advantages of PIS for the above three datasets, and supports the claim that importance weight helps improve the estimation of log normalization constants, based on the comparison between PIS$_{RW}$ and PIS. We also found that PIS-Grad trained with gradient information outperforms PIS-NN. The difference is more obvious in datasets that have well-separated modes, such as MG and LGCP, and less obvious on unimodal distributions like Funnel.

In all cases, PIS$_{RW}$-Grad is better than AFT and SMC. 
Interestingly, even \textit{without} annealing and gradient information of target density, PIS$_{RW}$-NN can outperform SMC with annealing trick and HMC kernel for the Funnel distribution. 

\subsection{Advantage of the specialized sampling algorithm}
From the perspective of particles dynamics, most existing MCMC algorithms are invariant to the target distribution. Therefore, particles are driven by gradient and random noise in a way that is independent of the given target distribution. In contrast, PIS learns different strategies to combine gradient information and noise for different target densities. The specialized sampling algorithm can generate samples more efficiently and shows better performance empirically in our experiments. The advantage can be showed in various datasets, from unimodal distributions like the Funnel distribution to multimodal distributions. The benefits and efficiency of PIS are more obvious in high dimensional settings as we have shown.

\begin{table}
    \begin{minipage}{\textwidth}
        \centering
        \begin{tabular}{c  ccc ccc}
            \hline
            \zqs{KL*} & $\mu$ & $\phi$ & $\eta_1$ & $\psi$ & $\eta_2$ & $\eta_3$ \\
            \hline
            VI-NF & 175.6 $\pm$4.5 & 24.2$\pm$ 4.1 & $3.1\pm 0.05$& 14.6$\pm$ 6.4& 7e-2$\pm$5e-3 & \textbf{8.5e-2$\pm$3.5e-3}\\
            SMC & 183.3 $\pm$2.3 & 18.3$\pm$ 2.1 & 0.32 $\pm$ 0.08 & 9.6 $\pm$ 1.2 & 0.12$\pm$0.05 & 0.15 $\pm$ 9e-3 \\
            SNF & 181.8 $\pm$0.75 & 6.3$\pm$ 0.71 & \textbf{0.17$\pm$0.05} & 1.58 $\pm$ 0.36 & 0.11$\pm$ 0.03 & 8.8e-2 $\pm$8e-3\\
            PIS-NN & \textbf{171.3 $\pm$0.61} & \textbf{5.2$\pm$ 0.35} & 0.32$\pm$0.03 & \textbf{1.03 $\pm$ 0.23} & \textbf{5e-2$\pm$5e-3} & 8.7e-2$\pm$3e-3\\
            \hline
        \end{tabular}
        \caption{KL-divergences comparison among variational approaches of generated density with target density in overall atom states distribution and five multimodal torsion angles. \zqs{We emphasis KL* denote the KL divergence between unnormalized distribution due to lack of ground truth normalization constants.}
        Mean and standard deviation are conducted with five different random seeds.}
        \label{tab:al-di}
    \end{minipage}
    \vspace{-0.7cm}
\end{table}

\subsection{Alanine dipeptide}
Building on the success achieved by flow models in the generation of asymptotically unbiased samples from physics models~\citep{lecun1998mnist}, we investigate the applications in the sampling of molecular structure from a simulation of Alanine dipeptide as introduced in~\citet{WuKohNoe20}. 
The target density of molecule is $\hat{\mu} = \exp(-E(\bx_{[0:65]}) -\frac{1}{2}\norm{\bx_{[66:131]}}^2)$. 

We compare PIS with popular variational approaches used in generating samples from the above model. 
More specifically, we consider VI-NF, and Stochastic Normalizing Flow~(SNF)~\citep{WuKohNoe20}. SNF is very close to AFT~\citep{arbel2021annealed}. Both of them couple deterministic normalizing flow layers and MCMC blocks except SNF uses an amortized structure. 
We include more details of MCMC kernel and modification in \cref{app:exp-details}.
\begin{wrapfigure}[7]{r}{4cm}
    \vspace{-4mm}
    \caption{Sampled Alanine dipeptide molecules}\label{fig:al-di}
    \includegraphics[width=4cm, height=2cm]{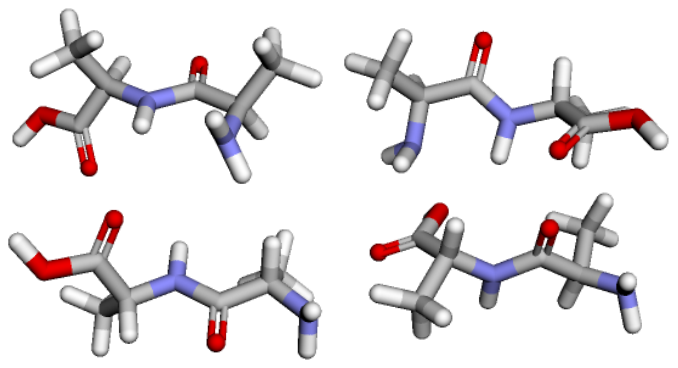}
\end{wrapfigure}
We show a generated molecular in \cref{fig:al-di} and quantitative comparison in terms of KL divergence in \cref{tab:al-di}, including overall atom states distribution and five multimodal torsion angles~(backbone angles $\phi, \psi$ and methyl rotation angles $\eta_1, \eta_2, \eta_3$). 
We remark that unweighted samples are used to approximate the density of torsion angles and all approaches do not use gradient information. Clearly, PIS gives lower divergence.
\vspace{-2mm}
\subsection{Sampling in Variational Autoencoder latent space}
In this experiment we investigate sampling in the latent space of a trained Variational Autoencoder~(VAE). VAE aims to minimize $\KL(q(\bx) q_{\phi}(\bz|\bx) \Vert p(\bz) p_{\theta}(\bx|\bz) )$,
where $q_{\phi}(\bz|\bx)$ represents encoder and $p_{\theta}$ for a decoder with latent variable $\bz$ and data $\bx$. We investigate the  posterior distribution
\begin{equation}
    \label{eq:z-dist}
    \bz \sim p(\bz) p_{\theta}(\bx|\bz).
\end{equation}
The normalization constant of such target unnormalized density function $p(\bz) p_{\theta}(\bx|\bz)$ is exactly the likelihood of data points $p_{\theta}(\bx)$, which serves as an evaluation metric for the trained VAE.

\begin{wraptable}[9]{r}{6.5cm}
    \caption{Estimation of $\log p_{\theta}(x)$ of a trained VAE.}
    \vspace{-0.1cm}
    \begin{tabular}{c ccc}
            \hline
             & B & S & $\sqrt{\text{B}^2 + \text{S}^2}$ \\
            \hline
            VI-NF             & -2.3 & 0.76 & 2.42  \\
            AFT               & -1.7 & 0.95 & 1.96  \\
            SMC               & -10.6 & 2.01 & 10.79\\
            PIS$_{RW}$-NN     & -1.9  & 0.81 & 2.06 \\
            PIS$_{RW}$-Grad   & \textbf{-0.87} & \textbf{0.31} & \textbf{0.92}  \\
            \hline
        \end{tabular}
        \vspace{-0.3cm}
        \label{tab:vae}
\end{wraptable} 
We investigate a vanilla VAE model trained with plateau loss on the binary MNIST~\citep{lecun1998mnist} dataset.
For each distribution, we regard the average estimation from 10 long-run SMC with 1000 temperature levels as the ground truth normalization constant. 
We choose 100 images randomly and run the various approaches on estimating normalization of those posterior distributions in \cref{eq:z-dist} and report the average performance in \cref{tab:vae}. PIS has a lower bias and variance.
\vspace{-3mm}
\section{Conclusion} \label{sec:conclusion}
\textbf{Contributions.} In this work, we proposed a new sampling algorithm, Path Integral Sampler, based on the connections between sampling and stochastic control. 
The control can drive particles from a simple initial distribution to a target density perfectly when the policy is optimal for an optimal control problem whose terminal cost depends on the target distribution. 
Furthermore, we provide a calibration based on importance weights, ensuring sampling quality even with sub-optimal policies. 

\textbf{Limitations.} Compared with most popular non-learnable MCMC algorithms, PIS requires training neural networks for the given distributions, which adds additional computational overhead, though this can be mitigated with amortization. Besides, the sampling quality of PIS in finite steps depends on the optimality of trained network. 
Improper choices of hyperparameters may lead to numerical issues and failure modes as discussed in \cref{app:failure-modes}.


\section{Reproducibility Statement}
The detailed discussions on assumptions and proofs of theorems presented in the main paper are included in \cref{app:thm-coincide,app:proof-nona,app:proof-ess,app:logz-estimation}. The training settings and implementation tips of the algorithms are included in \cref{app:exp-details,app:tech_tips}. An implementation based on PyTorch~\citep{paszke2019pytorch} of PIS can be found in \url{https://github.com/qsh-zh/pis}.


\subsubsection*{Acknowledgments}
  The authors would like to thank the anonymous reviewers for useful comments. This work is partially supported by NSF ECCS-1942523 and NSF CCF-2008513.

\bibliography{iclr2022_conference}
\bibliographystyle{iclr2022_conference}

\newpage
\appendix
\section{Proof of \cref{thm:coincide}}
\label{app:thm-coincide}

Before proving our main theorem, we introduce the following important lemma.
\begin{lemma}[\citet{dai1991stochastic, pavon1989stochastic}]\label{lemma:optimal-trans}
    The transition density associated with optimal control policy $\bu^*$ for \cref{eq:sde} and \cref{eq:cost-fn} is
    \begin{equation}\label{eq:optimal-trans}
        Q^{*}_{s,t}(\bx,\by) =\cQ^0_{s,t}(\bx,\by)\frac{\phi_t(\by)}{\phi_s(\bx)},
    \end{equation}
    where $\cQ^u_{s,t}(\bx,\by)$ denotes the transition probability from state $\bx$ at time $s$ to state $\by$ at time $t$.
\end{lemma}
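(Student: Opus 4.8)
The plan is to verify directly that the proposed kernel solves the correct Kolmogorov equation and then invoke uniqueness. Under the optimal policy $\bu^*_t(\bx) = \bg(t,\bx)'\nabla\log\phi_t(\bx)$ from \cref{eq:pi-u}, the controlled dynamics \cref{eq:sde} is a diffusion with drift $\bff + \bg\bg'\nabla\log\phi_t$ and diffusion coefficient $\bg$, whose generator is $\mathcal{L}^*_s = (\bff + \bg\bg'\nabla\log\phi_s)\cdot\nabla + \frac{1}{2}\Tr(\bg\bg'\nabla^2)$. Viewed as a function of the initial pair $(s,\bx)$ with the terminal pair $(t,\by)$ fixed, the transition density $Q^*_{s,t}(\bx,\by)$ is characterized by the backward Kolmogorov equation $-\partial_s Q^* = \mathcal{L}^*_s Q^*$ together with the initial condition $Q^*_{s,t}\to\delta(\bx-\by)$ as $s\uparrow t$. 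It therefore suffices to check that the claimed expression $\cQ^0_{s,t}(\bx,\by)\,\phi_t(\by)/\phi_s(\bx)$ satisfies both.

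First I would record the two ingredients that make the computation close. The uncontrolled transition density $\cQ^0_{s,t}(\bx,\by)$ solves the backward equation $-\partial_s \cQ^0 = \bff\cdot\nabla_\bx\cQ^0 + \frac{1}{2}\Tr(\bg\bg'\nabla_\bx^2\cQ^0)$ for the prior generator, while $\phi_s(\bx)$ solves the linear PDE \cref{eq:log-linear-hjb}, i.e. $\partial_s\phi_s + \bff\cdot\nabla\phi_s + \frac{1}{2}\Tr(\bg\bg'\nabla^2\phi_s)=0$, so $\phi$ is space-time harmonic for the prior generator. Since $\phi_t(\by)$ is constant in $(s,\bx)$ it cancels from both sides, and I only need to verify the equation for $h_s(\bx) := \cQ^0_{s,t}(\bx,\by)/\phi_s(\bx)$. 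Expanding $\partial_s h$, $\nabla h$ and $\nabla^2 h$ by the product and quotient rules and substituting into $\mathcal{L}^*_s h$, the first-order cross terms $(\nabla\phi)'\bg\bg'\nabla\cQ^0/\phi^2$ generated by the extra drift $\bg\bg'\nabla\log\phi$ cancel against those arising from $\frac{1}{2}\Tr(\bg\bg'\nabla^2 h)$, and likewise the quadratic terms $(\nabla\phi)'\bg\bg'\nabla\phi/\phi^3$ cancel. What remains collapses to $\frac{1}{\phi}\bigl[\bff\cdot\nabla\cQ^0 + \frac{1}{2}\Tr(\bg\bg'\nabla^2\cQ^0)\bigr] - \frac{\cQ^0}{\phi^2}\bigl[\bff\cdot\nabla\phi + \frac{1}{2}\Tr(\bg\bg'\nabla^2\phi)\bigr]$, which by the two relations above equals $-\partial_s\cQ^0/\phi + \cQ^0\partial_s\phi/\phi^2 = -\partial_s h$, exactly as required.

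It then remains to match the initial data: as $s\uparrow t$ the prior kernel concentrates, $\cQ^0_{s,t}(\bx,\by)\to\delta(\bx-\by)$, while $\phi_t(\by)/\phi_s(\bx)\to 1$ on the diagonal $\bx=\by$, so $Q^*_{s,t}\to\delta(\bx-\by)$. Uniqueness of the solution to the backward Kolmogorov problem then yields the claimed identity $Q^{*}_{s,t}(\bx,\by) =\cQ^0_{s,t}(\bx,\by)\,\phi_t(\by)/\phi_s(\bx)$.

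The main obstacle is the bookkeeping in the second paragraph: the verification closes only because the optimal drift is \emph{exactly} $\bg\bg'\nabla\log\phi$, which is precisely the correction needed to annihilate both the first-order gradient cross terms and the quadratic $(\nabla\phi)'\bg\bg'\nabla\phi$ term. I would be careful to use the symmetry of $\bg\bg'$ when combining the two first-order cross terms, since that is what makes their coefficients cancel. A more conceptual alternative is to observe that \cref{eq:log-linear-hjb} makes $\phi$ space-time harmonic for the prior generator, so the formula is simply the Doob $h$-transform of $\cQ^0$ by $\phi$; but the direct Kolmogorov verification above is self-contained and avoids invoking that machinery.
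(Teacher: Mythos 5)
Your verification is correct, but note that the paper itself does not prove this lemma at all: it is stated with a citation to \citet{dai1991stochastic, pavon1989stochastic} and used as a black box in the proof of Theorem~1. Your argument therefore supplies a self-contained derivation rather than an alternative to one in the paper. The computation checks out: with $A=\bg\bg'$ symmetric, the cross terms $(\nabla\phi)'A\nabla \cQ^0/\phi^2$ produced by the extra drift $A\nabla\log\phi$ cancel exactly against those from $\tfrac12\Tr(A\nabla^2 h)$, the cubic terms $(\nabla\phi)'A\nabla\phi/\phi^3$ likewise cancel, and what survives reduces to $-\partial_s h$ via the backward equation for $\cQ^0$ and the space-time harmonicity of $\phi$ from \cref{eq:log-linear-hjb}; the delta initial condition and the identification $\bu^*=\bg'\nabla\log\phi$ from \cref{eq:pi-u} are both used correctly. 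This is precisely the Doob $h$-transform with $h=\phi$, which is also how the cited references obtain the result. Two small points you could tighten: (i) positivity $\phi_s(\bx)=\E_{\cQ^0}[e^{-\Psi(\bx_T)}\mid\bx_s=\bx]>0$ is what makes the quotient $h_s=\cQ^0/\phi_s$ and the transformed drift well defined, and is worth stating; (ii) the appeal to uniqueness for the backward Kolmogorov problem implicitly assumes the same regularity and growth conditions under which Feynman--Kac holds (the paper's \cref{cond:u-lips} plays this role elsewhere), and it is also worth observing that $\int \cQ^0_{s,t}(\bx,\by)\phi_t(\by)\,d\by=\phi_s(\bx)$ by the tower property, so your candidate kernel integrates to one, confirming it is a genuine transition probability.
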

{\bf Proof of \cref{thm:coincide}:}
We denote the initial Dirac distribution by $\nu = \delta_{\bar{\bx}_0}$. Combining \cref{eq:pi-phi} and $V_t(\bx) = -\log \phi_t(\bx)$ we obtain 
\[
    V_0(\bar{\bx}_0)=-\log E_{\cQ^0}[\exp (-\Psi(\bx_T))]=-\log(\int \frac{\mu}{\mu^0} \rd\mu^0)=0.
\]

Therefore, we can evaluate the KL divergence between $\cQ^*$ and $\cQ^0(\tau | \bx_T) \mu(\bx_T)$ as
\begin{equation*}
    \KL(\cQ^*(\tau) \Vert \cQ^0(\tau | \bx_T) \mu(\bx_T) ) = \E_{\tau \sim \cQ^*} [\int_0^T \frac{1}{2}\norm{\bu_t^*}^2\rd t + \Psi(\bx_T)] = V_0(\bar{\bx}_0) = 0.
\end{equation*}

The first equality is based on \cref{eq:kl-cost}. Next, we show that $\bx_T^* \sim \mu$. The above equations imply $\cQ^*_{0,T}(\bar{\bx}_0, \by)\rd\by = \exp{(-\Psi(\by))}\mu^0(\rd\by)$. It follows that
\begin{equation*}
    \mathbb{P}[\bx_T^* \in A] = \int_{A} \cQ^*_{0,T}(\bar{\bx}_0,\by)\rd \by = \int_{A} \exp(-\Psi(\by))\mu^0(\rd \by)=\mu(A).
\end{equation*}

\section{Proof of importance weights}
\label{app:proof-weights}
By definition 
\zqs{
\begin{equation}
    w^u(\tau) =  \frac{\rd\cQ^*(\tau)}{\rd\cQ^u(\tau)} = \frac{\rd\cQ^*(\tau)}{\rd\cQ^0(\tau)} \frac{\rd\cQ^0(\tau)}{\rd\cQ^u(\tau)}.
\end{equation}
}
Plugging \cref{eq:girsanov} and \cref{eq:optimal-trans} into the above we obtain

\begin{equation*}
    w^u(\tau) = \exp (\int_0^T \zqs{-}\frac{1}{2}\norm{\bu_t}^2 \rd t \zqs{-} \bu_t^\prime \rd\bw_t \zqs{-} \log\frac{\mu^0(\bx_T)}{\mu(\bx_T)} )
    = \exp(\int_0^T \zqs{-} \frac{1}{2}\norm{\bu_t}^2 \rd t \zqs{-} \bu_t^\prime \rd\bw_t \zqs{-} \Psi(\bx_T)).
\end{equation*}

\section{Proof of \Cref{thm:nona-bound}}
\label{app:proof-nona}

\subsection{Lipchitz conditon and Preliminary lemma}

To ease the burden of notations, we assume $\bx_0 \sim \delta_0$. Our conclusion and proof can be generalized to other Dirac distributions easily.
We start by assuming some conditions on the Lipschitz of optimal policy $\bu^*$. 
It promises the existence of a unique strong solution with $\bx_T \sim \mu$. The conditions and properties are studied in Schr{\" o}dinger-F{\" o}llmer process, which dates back to the Schr{\" o}dinger problem. For proof of existence of a unique strong solution and detailed discussion, we refer the reader to \citet{dai1991stochastic,Leo14,eldan2020stability, huang2021schr}.
\begin{condition}\label{cond:u-lips}
    \begin{equation} \label{eq:u-lips-0}
        ||\bu_t^*( \bx)||_2^2 \leq C_0(1 + \norm{\bx}^2)
    \end{equation}
    and
    \begin{equation} \label{eq:u-lips-xyt}
        \norm{\bu_{t_1}^*(\bx_1) - \bu_{t_2}^*(\bx_2)} \leq C_1 (\norm{\bx_1 - \bx_2} + |t_1 - t_2|^{\frac{1}{2}}).
    \end{equation}
\end{condition}

Below is the formal statement of \Cref{thm:nona-bound}.
\begin{thm}\label{thm:nona-bound-formal}
    Under \cref{cond:u-lips}, with sampling step size $\Delta t$, if $\norm{\bu_{t}^* - \bu_{t}}^2 \leq d \epsilon$ for any $t$, then
    \begin{equation}
        W_2(\cQ^u(\bx_T), \mu(\bx_T)) = \Bigo(\sqrt{T d (\Delta t +\epsilon)}).
    \end{equation}
\end{thm}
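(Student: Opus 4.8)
\textbf{Proof proposal for Theorem~\ref{thm:nona-bound-formal}.}
The plan is to bound the Wasserstein distance by coupling three processes over $[0,T]$: the target (optimal) continuous-time SDE driven by $\bu^*$, the learned continuous-time SDE driven by $\bu$, and the actually-simulated discretized process driven by $\bu$ with step size $\Delta t$. Since $\cQ^*(\bx_T)=\mu(\bx_T)$ by \Cref{thm:coincide}, it suffices to control the terminal $L^2$ gap between the coupled trajectories, because $W_2^2(\cQ^u(\bx_T),\mu(\bx_T)) \le \E\norm{\bx_T^{\text{disc}} - \bx_T^*}^2$ whenever the two are driven by the \emph{same} Brownian motion $\bw_t$. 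I would therefore fix a single realization of $\bw$, define $\bx_t^*$ and $\bx_t$ (continuous, exact) and $\tilde\bx_t$ (piecewise-constant-control discretization) on that common noise, and set $e_t := \bx_t - \bx_t^*$ and split the final error via the triangle inequality into a \emph{policy-approximation} part and a \emph{discretization} part.

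The key steps are as follows. First I would write the difference of the two exact SDEs: with $\bff=0,\bg=\bI$ the drift difference is exactly $\bu_t(\bx_t)-\bu_t^*(\bx_t^*)$, and the Brownian terms cancel under the common-noise coupling, so $e_t = \int_0^t \big(\bu_s(\bx_s)-\bu_s^*(\bx_s^*)\big)\,ds$ is of bounded variation (no stochastic integral survives). Next I would insert $\pm\,\bu_s^*(\bx_s)$ to split the integrand into $\big(\bu_s(\bx_s)-\bu_s^*(\bx_s)\big)$, bounded in $L^2$ by $\sqrt{d\epsilon}$ uniformly by hypothesis, plus $\big(\bu_s^*(\bx_s)-\bu_s^*(\bx_s^*)\big)$, bounded by $C_1\norm{e_s}$ via the spatial Lipschitz estimate \Cref{eq:u-lips-xyt}. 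Taking norms, squaring, applying Cauchy--Schwarz in time and then taking expectations yields a Gr\"onwall-type inequality $\E\norm{e_t}^2 \le c_1 T\!\int_0^t \E\norm{e_s}^2\,ds + c_2 T^2 d\epsilon$, whose integral form gives $\E\norm{e_T}^2 = \Bigo(T d\,\epsilon)$ after the exponential Gr\"onwall factor $e^{C_1^2 T}$ is absorbed into the $\Bigo$ (treating $T$, $C_0$, $C_1$ as constants). For the discretization part I would repeat the same Gr\"onwall argument comparing $\tilde\bx_t$ to $\bx_t$: here the local error on each interval comes from freezing the drift at the left endpoint, and the temporal H\"older-$\tfrac12$ bound in \Cref{eq:u-lips-xyt} together with the linear-growth bound \Cref{eq:u-lips-0} (which controls $\E\norm{\bx_s-\bx_{\lfloor s\rfloor}}^2 = \Bigo(d\,\Delta t)$ over one step, since the Brownian increment has variance $d\,\Delta t$) produces a per-step error $\Bigo(d\,\Delta t)$, accumulating to $\Bigo(T d\,\Delta t)$ after Gr\"onwall.

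Combining the two halves gives $\E\norm{\bx_T^{\text{disc}}-\bx_T^*}^2 = \Bigo\!\big(T d(\Delta t+\epsilon)\big)$, and taking square roots yields the claimed $W_2 = \Bigo(\sqrt{T d(\Delta t+\epsilon)})$. The main obstacle I expect is the discretization half rather than the policy half: one must carefully control the one-step deviation $\norm{\tilde\bx_t - \tilde\bx_{t_k}}$ \emph{inside} the nonlinear drift, where the Brownian motion enters, so the clean cancellation of noise that worked for $e_t$ no longer holds and genuine It\^o moment estimates (using linear growth to bound $\E\norm{\bx_s}^2$ uniformly, then the temporal Lipschitz piece) are needed to get the correct $\Delta t$ scaling. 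A secondary technical point is ensuring the dimension dependence tracks as a single factor of $d$ rather than $d^2$: this requires that the hypothesis is stated as $\norm{\bu^*-\bu}^2\le d\epsilon$ (an $L^2$-per-dimension bound) and that the Brownian contribution to the one-step variance scales as $d\,\Delta t$, both of which feed linearly through the Gr\"onwall estimate.
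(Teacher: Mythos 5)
Your proposal is correct in substance and rests on the same core machinery as the paper's proof: a synchronous (common-noise) coupling so that $W_2^2$ is bounded by the terminal mean-square gap, insertion of $\pm\bu^*$ to separate the $d\epsilon$ policy error from the Lipschitz-controlled state mismatch, Young/Cauchy--Schwarz in time, and a (discrete) Gr\"onwall unrolling, with the exponential-in-$T$ factor absorbed exactly as you describe. The one genuine structural difference is your decomposition: you pass through an intermediate \emph{continuous-time} process driven by the sub-optimal $\bu$ and run two separate Gr\"onwall arguments, whereas the paper compares the discretized sub-optimal chain directly to the optimal continuous trajectory in a single one-step recursion. The paper's route has a concrete payoff here: the only trajectory-increment moment bound it needs, $\E\norm{\bx^*_{t_2}-\bx^*_{t_1}}^2=\Bigo(d\,|t_2-t_1|)$, is exactly what \cref{lemma:exp-sq-ineq} supplies for the \emph{optimal} process, and the term $\norm{\bx^*_s-\bx_{t_{k-1}}}^2$ is routed through $\bx^*_{t_{k-1}}$ so that nothing about the intermediate process is ever required. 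Your split instead needs (i) an analogous one-step moment estimate for the continuous sub-optimal process, which is not covered by the cited lemma and would have to be rederived (straightforwardly, from \cref{cond:u-lips} plus the uniform $\sqrt{d\epsilon}$ deviation), and (ii) the observation that the freezing error in your discretization half involves the modulus of continuity of $\bu$ itself, which \cref{cond:u-lips} does not assume --- you must write $\bu=\bu^*+(\bu-\bu^*)$ and let the uniformly bounded deviation contribute an additional $\Bigo(T^2 d\epsilon)$, which is harmless but easy to overlook. With those two points patched, your argument goes through and arguably separates the two error sources more transparently; the paper's single recursion is the more economical use of the available lemma.
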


We introduce the following lemma before stating the proof for \Cref{thm:nona-bound}. 

\begin{lemma}\citep[Lemma A.2.]{huang2021schr}\label{lemma:exp-sq-ineq}
    Assume \Cref{cond:u-lips} holds, then the following inequality holds for $\bx_t$ generated from $\bu^*$,
    \begin{equation} \label{eq:exp-sq-t12}
        \E [\norm{\bx_{t_2} - \bx_{t_1}}^2] \leq 
        4C_0 \exp(2C_0)(C_0 + d)(t_2 - t_1)^2 + 2C_0(t_2 - t_1)^2 + 2d|t_2 - t_1|,~ t_1,t_2 \in [0,T].
    \end{equation}
\end{lemma}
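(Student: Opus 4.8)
The plan is to work directly with the integral form of the controlled SDE \cref{eq:sde}. In the simplified setting used throughout this appendix ($\bff = 0$, $\bg = I$, $\bx_0 \sim \delta_0$), the process driven by the optimal policy satisfies $\bx_t = \int_0^t \bu_s^*(\bx_s)\,ds + \bw_t$, so for $t_1 \le t_2$ the increment splits as $\bx_{t_2} - \bx_{t_1} = \int_{t_1}^{t_2}\bu_s^*(\bx_s)\,ds + (\bw_{t_2} - \bw_{t_1})$. Applying $\norm{\va+\vb}^2 \le 2\norm{\va}^2 + 2\norm{\vb}^2$ and taking expectations reduces the claim to bounding a drift contribution and a noise contribution separately. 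The noise term is immediate, since $\E[\norm{\bw_{t_2}-\bw_{t_1}}^2] = d(t_2-t_1)$, which after the leading factor $2$ produces exactly the $2d|t_2-t_1|$ summand.

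For the drift term, I would use Cauchy--Schwarz in time to write $\norm{\int_{t_1}^{t_2}\bu_s^*(\bx_s)\,ds}^2 \le (t_2-t_1)\int_{t_1}^{t_2}\norm{\bu_s^*(\bx_s)}^2\,ds$, then invoke the linear-growth bound \cref{eq:u-lips-0} together with Fubini to obtain $\E[\cdot] \le (t_2-t_1)\int_{t_1}^{t_2} C_0(1 + \E[\norm{\bx_s}^2])\,ds$. This reduces everything to an a priori second-moment estimate $m(s) := \E[\norm{\bx_s}^2]$, which is the crux of the argument.

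To control $m(s)$, I would apply the same decomposition to $\bx_s = \int_0^s \bu_r^*(\bx_r)\,dr + \bw_s$, yielding $m(s) \le 2C_0 s\int_0^s (1 + m(r))\,dr + 2ds$. Using the (normalized Schr\"odinger--F\"ollmer) horizon $s \le T \le 1$ to absorb the prefactors gives $m(s) \le (2C_0 + 2d) + 2C_0\int_0^s m(r)\,dr$, and Gr\"onwall's inequality then yields the uniform bound $m(s) \le 2(C_0 + d)\exp(2C_0)$. Substituting $M := 2(C_0+d)\exp(2C_0)$ back into the drift estimate produces $2C_0(1+M)(t_2-t_1)^2 = 2C_0(t_2-t_1)^2 + 4C_0\exp(2C_0)(C_0+d)(t_2-t_1)^2$, and adding the noise term recovers the stated inequality exactly.

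The main obstacle is the a priori moment bound: both Gr\"onwall's inequality and the Fubini exchange require $m(s)$ to be finite in advance, which is guaranteed by the existence of a unique strong solution under \cref{cond:u-lips} (cited from the Schr\"odinger--F\"ollmer literature), and the clean absorption of the quadratic-in-time prefactor into the constants relies on the finite-horizon normalization $T \le 1$. Everything else is a routine application of the growth condition; I note that the Lipschitz part \cref{eq:u-lips-xyt} of \cref{cond:u-lips} is not actually needed here, since only the linear-growth bound \cref{eq:u-lips-0} enters the estimate.
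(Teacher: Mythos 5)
Your proof is correct, and it is essentially the standard argument behind this result: the paper itself offers no proof, importing the lemma verbatim from \citet[Lemma A.2]{huang2021schr}, where it is established exactly as you do --- drift/noise decomposition of the increment, Cauchy--Schwarz plus the growth bound \cref{eq:u-lips-0}, and a Gr\"onwall estimate on the second moment $m(s)=\E[\norm{\bx_s}^2]$ (your remark that the Lipschitz bound \cref{eq:u-lips-xyt} is never used here is also accurate). Your caveat about the horizon is well taken and worth recording: the constant $\exp(2C_0)$ carries no $T$-dependence precisely because the Schr\"odinger--F\"ollmer normalization in the cited source fixes the horizon to $[0,1]$, so the lemma as stated for general $t_1,t_2\in[0,T]$ implicitly assumes $T\le 1$ --- for larger $T$ your Gr\"onwall step would instead produce $T$-dependent factors (of order $\exp(2C_0T^2)$), a discrepancy the paper inherits when it invokes the lemma for general $T$ in the proof of \cref{thm:nona-bound-formal}.
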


\subsection{Proof of \Cref{thm:nona-bound}}
We denote by $\bx^*_{(0:T)}$ the trajectory controlled by the optimal policy $\bu^*$, and by $\{ \bx_{t_0}, \bx_{t_1}, \cdots, \bx_{t_N} \}$ the discrete time process with sub-optimal policy $\bu$ over discrete time $\{ t_k \}$ such that $t_0=0, t_N=T$ and $t_k-t_{k-1}=\Delta t$. The process $\{\bx_{t_k} \}$ can be extended to continuous time setting as
\begin{equation*}
    \bx_{t_k} = \bx_{t_{k-1}} + \int_{t_{k-1}}^{t_k}\bu_{t_{k-1}}(\bx_{t_{k-1}})\rd s + \rd\bw_s.
\end{equation*}

The key of our proof is the bound of $\norm{\bx_{t_{k}} - \bx^*_{t_{k}}}^2$ as
\begin{align}
    \norm{\bx_{t_{k}} - \bx^*_{t_{k}}}^2 &= \norm{
            \bx_{t_{k-1}} + \int_{t_{k-1}}^{t_k} \bu_{t_{k-1}}( \bx_{t_{k-1}}) \rd s + \rd\bw_{s}
            -
            [\bx^*_{t_{k-1}} + \int_{t_{k-1}}^{t_k} \bu^*_{s}( \bx^*_{s}) \rd s + \rd\bw_{s}]
        }^2 \nonumber \\
        &\leq \norm{\bx_{t_{k-1}} - \bx^*_{t_{k-1}}}^2
              + \norm{\int_{t_{k-1}}^{t_k} [\bu^*_{s}( \bx^*_{s}) - \bu_{t_{k-1}}( \bx_{t_{k-1}})]\rd s}^2\nonumber \\
              &\quad\quad + 2\norm{\bx_{t_{k-1}} - \bx^*_{t_{k-1}}} \norm{\int_{t_{k-1}}^{t_k} [\bu^*_{s}( \bx^*_{s}) - \bu_{t_{k-1}}( \bx_{t_{k-1}})]\rd s} \nonumber \\
        &\leq \norm{\bx_{t_{k-1}} - \bx^*_{t_{k-1}}}^2
            + (\int_{t_{k-1}}^{t_k} \norm{\bu^*_{s}( \bx^*_{s}) - \bu_{t_{k-1}}( \bx_{t_{k-1}})}\rd s)^2\nonumber \\
            &\quad\quad + 2\norm{\bx_{t_{k-1}} - \bx^*_{t_{k-1}}} (\int_{t_{k-1}}^{t_k} \norm{\bu^*_{s}( \bx^*_{s}) - \bu_{t_{k-1}}( \bx_{t_{k-1}})}\rd s) \nonumber \\
        &\leq (1 + \alpha) \norm{\bx_{t_{k-1}} - \bx^*_{t_{k-1}}}^2
            + (1 + \frac{1}{\alpha})(\int_{t_{k-1}}^{t_k} \norm{\bu^*_{s}( \bx^*_{s}) - \bu_{t_{k-1}}( \bx_{t_{k-1}})}\rd s)^2\nonumber \\
        &\leq (1 + \alpha) \norm{\bx_{t_{k-1}} - \bx^*_{t_{k-1}}}^2 \nonumber\\
        &\quad \quad + (1 + \frac{1}{\alpha})(t_k - t_{k-1})(\int_{t_{k-1}}^{t_k} \norm{\bu^*_{s}( \bx^*_{s}) - \bu_{t_{k-1}}( \bx_{t_{k-1}})}^2 \rd s), \label{eq:nona-bound-goal}
\end{align}
where the first and second inequalities are based on the triangle inequality, 
the third inequality is based $2ab \leq \alpha a^2 + \frac{1}{\alpha}b^2$ for any $\alpha > 0$, and the forth inequality is based on the Cauchy-Schwarz inequality. 

In the following we bound the second term in \cref{eq:nona-bound-goal} as
\begin{align*}
    & \norm{\bu_{s}^*(\bx^*_{s}) - \bu_{t_{k-1}}(\bx_{t_{k-1}})}^2 \\
    &= \norm{\bu_{s}^*(\bx^*_{s}) - \bu^*_{t_{k-1}} (\bx_{t_{k-1}}) + \bu^*_{t_{k-1}}(\bx_{t_{k-1}})- \bu_{t_{k-1}}(\bx_{t_{k-1}})}^2 \\
    & \leq (1+\beta)\norm{\bu_{s}^*(\bx^*_{s}) - \bu^*_{t_{k-1}}(\bx_{t_{k-1}})}^2 
        + (1 + \frac{1}{\beta})\norm{\bu_{t_{k-1}}^*(\bx_{t_{k-1}})- \bu_{t_{k-1}}( \bx_{t_{k-1}})}^2\\
    & \leq 2C_1^2(1+\beta)[\norm{\bx_s^* - \bx_{t_{k-1}}}^2 + |s - t_{k-1}|] 
        + (1 + \frac{1}{\beta})\norm{\bu^*_{t_{k-1}}(\bx_{t_{k-1}})- \bu_{t_{k-1}}( \bx_{t_{k-1}})}^2,
\end{align*}
where the first inequality uses $2ab \leq \beta a^2 + \frac{1}{\beta}b^2$ for an arbitrary $\beta>0$ and the second one is based on~\cref{eq:u-lips-xyt}.
It follows that 
\begin{align*}
    & \int_{t_{k-1}}^{t_k}\norm{\bu_{s}^*(\bx^*_{s}) - \bu_{t_{k-1}}( \bx_{t_{k-1}})}^2 \rd s\\
    & \leq \int_{t_{k-1}}^{t_k} 2C_1^2(1+\beta)\norm{\bx^*_{s} - \bx_{t_{k-1}}}^2\rd s
           +\int_{t_{k-1}}^{t_k} 2C_1^2(1+\beta)(s - t_{k-1}) \rd s \\
           &\quad+ \int_{t_{k-1}}^{t_k} (1 + \frac{1}{\beta})\norm{\bu^*_{t_{k-1}}(\bx_{t_{k-1}})- \bu_{t_{k-1}}( \bx_{t_{k-1}})}^2 \rd s.
\end{align*}
Thus, for stepsize $\Delta t = t_{k} - t_{k-1}$, we establish
\begin{align}
    & \int_{t_{k-1}}^{t_k}\norm{\bu_{s}^*(\bx^*_{s}) - \bu_{t_{k-1}}( \bx_{t_{k-1}})}^2 \rd s \nonumber \\
    & \leq 
    \int_{t_{k-1}}^{t_k} 2C_1^2(1+\beta)\norm{\bx^*_{s} - \bx_{t_{k-1}}}^2\rd s \nonumber \\
        &\quad + C_1^2(1+\beta)\Delta t^2
        +  (1 + \frac{1}{\beta})
        \norm{\bu^*_{t_{k-1}}(\bx_{t_{k-1}})- \bu_{t_{k-1}}(\bx_{t_{k-1}})}^2\Delta t
        \label{eq:int_u}.
\end{align}

Next we bound $\norm{\bx_s^* - \bx_{t_{k-1}}}^2$ in \cref{eq:int_u} as
\begin{equation}\label{eq:nona-bound-delta-x}
    \norm{\bx_s^* - \bx_{t_{k-1}}}^2 \leq
    (1 + \eta) \norm{\bx_s^* - \bx^*_{t_{k-1}}}^2 + 
    (1 + \frac{1}{\eta}) \norm{\bx^*_{t_{k-1}} - \bx_{t_{k-1}}}^2,
\end{equation}
where the inequality is based on $(a+b)^2 \leq (1+\eta)a^2 + (1+\frac{1}{\eta})b^2$ for an arbitrary $\eta>0$.

Plugging \cref{eq:nona-bound-delta-x} into \cref{eq:int_u} yields
\begin{align} \label{eq:nona-bound-int-u-bound}
    & \int_{t_{k-1}}^{t_k}\norm{\bu_{s}^*(\bx^*_{s}) - \bu_{t_{k-1}}(\bx_{t_{k-1}})}^2 \rd s \nonumber \\
    & \leq (1 + \frac{1}{\beta})
    \norm{\bu^*_{t_{k-1}}(\bx_{t_{k-1}})- \bu_{t_{k-1}}(\bx_{t_{k-1}})}^2\Delta t
        + C_1^2(1+\beta)\Delta t^2 \nonumber \\
        & \quad + 2C_1^2(1+\beta)(1+\frac{1}{\eta})\norm{\bx^*_{t_{k-1}} - \bx_{t_{k-1}}}^2 \Delta t
            + 2C_1^2(1+\beta)(1+\eta)\int_{t_{k-1}}^{t_k}\norm{\bx_s^* - \bx^*_{t_{k-1}}}^2 \rd s.
\end{align}
Plugging \cref{eq:nona-bound-int-u-bound,eq:int_u} into \cref{eq:nona-bound-goal} yields
\begin{align}
    LHS \leq &[1+\alpha + 2C_1^2(1+\frac{1}{\alpha})(1+\beta)(1+\frac{1}{\eta})\Delta t^2]\norm{\bx^*_{t_{k-1}} - \bx_{t_{k-1}}}^2 \nonumber \\
    &+ 2C_1^2(1+\frac{1}{\alpha})(1+\beta)(1+{\eta})\Delta t \int_{t_{k-1}}^{t_k}\norm{\bx_s^* - \bx^*_{t_{k-1}}}^2 \rd s \nonumber \\
    &+ (1+\frac{1}{\alpha})(1 + \frac{1}{\beta})\norm{\bu_{t_{k-1}}^*(\bx^*_{t_{k-1}}) - \bu_{t_{k-1}}(\bx_{t_{k-1}})}^2\Delta t^2  
    + (1+\frac{1}{\alpha})C_1^2(1+\beta)\Delta t^3. \label{eq:lhs-bound}
\end{align}

Invoking \cref{lemma:exp-sq-ineq}, we obtain
\begin{align*}
    \E[\int_{t_{k-1}}^{t_k}\norm{\bx_s^* - \bx^*_{t_{k-1}}}^2 \rd s] \leq 
    4C_0 \exp(2C_0)(C_0 + d)\Delta t^3 + 2C_0\Delta t^3 + 2d\Delta t^2.
\end{align*}
Taking the expectation of \cref{eq:lhs-bound}, in view of the above and the assumption on control, we establish
\begin{align*}
    \E[\norm{\bx_{t_k} - \bx^*_{t_k}}^2] \leq C_3 \E [\norm{\bx_{t_{k-1}} - \bx^*_{t_{k-1}}}^2] + C_4,
\end{align*}
where $C_3 = [1+\alpha + 2C_1^2(1+\frac{1}{\alpha})(1+\beta)(1+\frac{1}{\eta})\Delta t^2]$, and
\begin{align*}
    C_4 =
    &(1+\frac{1}{\alpha})(1 + \frac{1}{\beta})d \epsilon \Delta t^2  
    + (1+\frac{1}{\alpha})C_1^2(1+\beta)\Delta t^3 \\
    & + 2C_1^2(1+\frac{1}{\alpha})(1+\beta)(1+{\eta})[4C_0 \exp(2C_0)(C_0 + d)\Delta t^3 + 2C_0\Delta t^3 + 2d\Delta t^2]\Delta t.
\end{align*}

Finally, in view of the fact $\bx_0 = \bx^*_0$ and fixed step size $\Delta t$, we conclude that by the choice $\alpha=C_1 \Delta t, \beta=\eta=1$
\begin{equation}
    \E[\norm{\bx_{T} - \bx^*_{T}}^2] \leq \frac{C_3^{\frac{T}{\Delta t}} -1}{C_3-1}C_4 = \Bigo(dT(\Delta t+\epsilon)),
\end{equation}
where the last inequality is based on ignoring the high order terms, $C_3^{\frac{T}{\Delta t}} -1 \leq \Bigo(T)$ and $\frac{C_4}{C_3-1} \leq \Bigo(d(\Delta t +\epsilon))$.

\section{Proof of \cref{thm:ess}}
\label{app:proof-ess}

The proof is a natural extension of Corollary 7 in \citet{thijssen2015path}.

We define random variable 
\begin{equation}\label{eq:def-su}
    S^u(t)= \int_{0}^t \frac{\norm{\bu_s}^2}{2} \rd s + \bu_s' \rd\bw_s + \Psi(\bx_T),
\end{equation}
and
\begin{equation}
    \Phi(t) = \exp(-S^u(0) + S^u(t)).
\end{equation}

\begin{lemma}\citep[Lemma 4]{thijssen2015path} For any feasible control policy $\bu$ for stochastic optimal control problem, 
    \begin{equation}\label{eq:lemma-main}
        \Phi(T)\phi_t(\bx_T) - \Phi(t)\phi_t(\bx_t) = \int_{t}^{T} \Phi(s)\phi_t(\bx_s)(\bu_s^* - \bu_s)'\rd\bw_s.
    \end{equation}
\end{lemma}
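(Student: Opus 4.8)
The plan is to read \cref{eq:lemma-main} as the assertion that the scalar process
\begin{equation*}
    M_s := \Phi(s)\,\phi_s(\bx_s),\qquad s\in[t,T],
\end{equation*}
evaluated along the controlled trajectory $\{\bx_s\}$ generated by $\bu$ in \cref{eq:sde} under $\cQ^u$, is an It\^o process whose drift vanishes and whose diffusion coefficient equals $M_s(\bu_s^*-\bu_s)$; integrating $dM_s$ over $[t,T]$ is then precisely \cref{eq:lemma-main}. I match the time index of the value function to its spatial argument, i.e.\ I read $\phi_s(\bx_s)$, and I take $\Phi(s)$ to be the restriction of $d\cQ^0/d\cQ^u$ to $[0,s]$, the reciprocal of the Girsanov weight \cref{eq:girsanov}, so that $\log\Phi(s)=-\int_0^s\tfrac12\norm{\bu_r}^2dr-\int_0^s\bu_r'd\bw_r$. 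This is the sign under which $M_s$ is the density process of $\cQ^*$ relative to $\cQ^u$, hence already a $\cQ^u$-martingale, with $M_0=1$, $M_T=w^u$, and $M\equiv1$ when $\bu=\bu^*$; the Feynman--Kac identity \cref{eq:pi-phi} and the relation $\bg'\nabla\phi_s=\phi_s\bu_s^*$ coming from \cref{eq:pi-u} will supply all the cancellations.

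First I would differentiate $\phi_s(\bx_s)$. Applying It\^o's formula along $d\bx_s=(\bff+\bg\bu_s)ds+\bg\,d\bw_s$ puts the operator $\partial_s\phi_s+(\bff+\bg\bu_s)\cdot\nabla\phi_s+\tfrac12\Tr(\bg\bg'\nabla^2\phi_s)$ in the drift. The linear equation \cref{eq:log-linear-hjb} annihilates the uncontrolled part $\partial_s\phi_s+\bff\cdot\nabla\phi_s+\tfrac12\Tr(\bg\bg'\nabla^2\phi_s)$, leaving only the control term $\bu_s'\bg'\nabla\phi_s\,ds$, and substituting $\bg'\nabla\phi_s=\phi_s\bu_s^*$ yields
\begin{equation*}
    d\phi_s(\bx_s)=\phi_s(\bx_s)\big[\bu_s'\bu_s^*\,ds+(\bu_s^*)'d\bw_s\big].
\end{equation*}
Next I would differentiate the weight: with $X_s:=\log\Phi(s)$ one has $dX_s=-\tfrac12\norm{\bu_s}^2ds-\bu_s'd\bw_s$ and $d\langle X\rangle_s=\norm{\bu_s}^2ds$, so the exponential It\^o correction cancels the $\tfrac12\norm{\bu_s}^2$ term and produces the drift-free form $d\Phi(s)=-\Phi(s)\bu_s'd\bw_s$.

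Feeding these into the product rule $dM_s=\phi_s(\bx_s)\,d\Phi(s)+\Phi(s)\,d\phi_s(\bx_s)+d\langle\Phi,\phi(\bx)\rangle_s$, the cross-variation is $d\langle\Phi,\phi(\bx)\rangle_s=-\Phi(s)\phi_s(\bx_s)\bu_s'\bu_s^*\,ds$. The drift $+\bu_s'\bu_s^*$ arising from $\Phi\,d\phi$ is exactly cancelled by this $-\bu_s'\bu_s^*$, while the two diffusion coefficients $-\bu_s$ and $\bu_s^*$ add to $\bu_s^*-\bu_s$. Hence $dM_s=M_s(\bu_s^*-\bu_s)'d\bw_s$, and integrating over $[t,T]$ gives \cref{eq:lemma-main}.

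The main obstacle is not conceptual but the disciplined bookkeeping of signs and second-order terms: one must adopt the sign in $\Phi$ for which $\Phi\phi$ is a $\cQ^u$-martingale, and one must retain both the $\tfrac12\norm{\bu}^2$ correction in $d\Phi$ and the $\bu'\bu^*$ cross-variation in the product rule, since it is exactly their interplay that forces the drift to vanish; dropping either would leave a spurious $ds$ term. A minor regularity point is that It\^o's formula needs $\phi_s(\cdot)\in C^{1,2}$, which follows from the Feynman--Kac representation \cref{eq:pi-phi} under the growth and Lipschitz hypotheses in \cref{cond:u-lips}; these same bounds guarantee that the right-hand stochastic integral in \cref{eq:lemma-main} is a well-defined (local) martingale, which is all the lemma asserts.
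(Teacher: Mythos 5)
The paper does not actually prove this lemma; it is imported verbatim as Lemma~4 of \citet{thijssen2015path}, so there is no in-paper argument to compare against. Your It\^o-calculus derivation is correct and is essentially the standard proof of that result: the PDE \cref{eq:log-linear-hjb} kills the uncontrolled generator in $d\phi_s(\bx_s)$, the identity $\bg'\nabla\phi_s=\phi_s\bu_s^*$ from \cref{eq:pi-u} converts the surviving drift into $\phi_s\,\bu_s'\bu_s^*\,ds$, the exponential correction makes $\Phi$ drift-free, and the cross-variation in the product rule cancels the remaining $\bu'\bu^*$ term, leaving $dM_s=M_s(\bu_s^*-\bu_s)'d\bw_s$. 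You were also right to make the two silent repairs to the statement: the subscripts must read $\phi_s(\bx_s)$, $\phi_T(\bx_T)$ rather than the fixed $\phi_t(\cdot)$ printed in the lemma, and $\log\Phi(s)$ must carry the negative sign $-\int_0^s\tfrac12\norm{\bu_r}^2dr-\int_0^s\bu_r'd\bw_r$. The latter is consistent with reading $S^u(t)$ in \cref{eq:def-su} as the cost-to-go $\int_t^T$ (as in \citet{thijssen2015path}) rather than the $\int_0^t$ literally written; with the paper's literal $\int_0^t$ convention $\Phi$ acquires a drift $\norm{\bu}^2\Phi\,ds$ and the identity fails, so your sign choice is the one under which the lemma is true and under which $M_T=w^u$ matches \cref{eq:weights}. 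The only cosmetic caveat is that $M_0=\phi_0(\bx_0)$ rather than $1$ unless $\Psi$ is normalized so that $V_0=0$, but this does not affect the argument.
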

\begin{cor}
    \begin{equation}\label{eq:important-value-fn}
        \phi_t(\bx) = \E_{\cQ^0}[\exp(-\Psi(\bx_T))|\bx_t=\bx] 
        = \E_{\cQ^u}[ \exp(-\Psi(\bx_T) - \int_{t}^T \frac{\norm{\bu_s}^2}{2}\rd s) | \bx_t =\bx]
    \end{equation}
\end{cor}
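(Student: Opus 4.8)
The first equality is nothing but the Feynman--Kac representation \cref{eq:pi-phi}, so all the work is in the second equality, which re-expresses $\phi_t$ as an expectation under the \emph{controlled} path measure $\cQ^u$. The plan is to extract this directly from \cref{eq:lemma-main} by a single conditional-expectation argument. Fix a time $t$ and condition on $\bx_t=\bx$. The right-hand side of \cref{eq:lemma-main} is an It\^o integral against the $\cQ^u$-Brownian motion $\bw$ on $[t,T]$, hence a (local) martingale that starts at $0$ at time $t$; granting that it is a genuine martingale, its $\cQ^u$-conditional expectation given $\mathcal{F}_t$ is zero. Taking $\E_{\cQ^u}[\,\cdot\mid\mathcal{F}_t]$ of \cref{eq:lemma-main} and using that $\Phi(t)$ and $\phi_t(\bx_t)$ are $\mathcal{F}_t$-measurable therefore yields
\begin{equation*}
    \Phi(t)\phi_t(\bx_t) = \E_{\cQ^u}\big[\Phi(T)\phi_T(\bx_T)\,\big|\,\mathcal{F}_t\big].
\end{equation*}

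It then remains to substitute the known data and divide. The terminal condition of \cref{eq:log-linear-hjb} gives $\phi_T(\bx_T)=\exp(-\Psi(\bx_T))$, and from the definition $\Phi(t)=\exp(-S^u(0)+S^u(t))$ together with \cref{eq:def-su} the ratio $\Phi(T)/\Phi(t)$ collapses (the terminal term $\Psi(\bx_T)$ in $S^u$ cancels) to the exponentiated increment of the running control cost over $[t,T]$. Dividing the displayed identity by the $\mathcal{F}_t$-measurable factor $\Phi(t)$ and feeding in these two facts produces exactly the integrand of \cref{eq:important-value-fn}. Finally, since the dynamics are Markov and the quantity being averaged depends only on the post-$t$ trajectory, conditioning on $\mathcal{F}_t$ may be replaced by conditioning on $\bx_t=\bx$, giving the stated form.

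The one genuinely technical step, and the main obstacle, is the martingale justification glossed over above: I must upgrade the right-hand side of \cref{eq:lemma-main} from a local martingale to a true martingale so that its conditional expectation is exactly $0$. Concretely this requires controlling the quadratic variation, i.e.\ bounding $\E_{\cQ^u}\!\int_t^T \Phi(s)^2\phi_s(\bx_s)^2\norm{\bu_s^*-\bu_s}^2\,ds$. For this I would invoke \cref{cond:u-lips} to bound $\phi$ and the moments of $\bx_s$ (the growth bound \cref{eq:u-lips-0} and Lipschitz bound \cref{eq:u-lips-xyt} keep the integrand integrable), together with the smallness hypothesis on $\norm{\bu_s^*-\bu_s}$ used in \cref{thm:ess}; a standard localization or Novikov-type argument then removes the local-martingale caveat. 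Everything else in the proof is substitution of the terminal condition and unwinding the definition of $\Phi$, which I would leave as routine.
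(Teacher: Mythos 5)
Your argument is essentially correct, but it takes a genuinely different route from the paper's. The paper proves this corollary in one line: it is a direct change of measure (importance sampling) from $\cQ^0$ to $\cQ^u$ using the Girsanov density \cref{eq:girsanov}, i.e.\ $\phi_t(\bx)=\E_{\cQ^u}\bigl[\exp(-\Psi(\bx_T))\,\tfrac{d\cQ^0}{d\cQ^u}\,\big|\,\bx_t=\bx\bigr]$, with no appeal to \cref{eq:lemma-main} at all. You instead derive it \emph{from} \cref{eq:lemma-main} by conditioning and killing the It\^o integral; this is not circular (the lemma is proved in \citet{thijssen2015path} by applying It\^o's formula to $\Phi(s)\phi_s(\bx_s)$ using the PDE \cref{eq:log-linear-hjb}, independently of the corollary), and it has the virtue of making the true-martingale issue explicit, which the paper's one-liner silently absorbs into the validity of Girsanov. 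The cost is that you invoke heavier machinery for a statement the paper uses as an ingredient \emph{before} \cref{eq:lemma-main} in the logical flow of the ESS proof, and you import the smallness hypothesis on $\norm{\bu^*-\bu}$ from \cref{thm:ess}, which the corollary itself does not assume; the Girsanov route needs no such hypothesis. One substantive caveat on your final substitution: the ratio $\Phi(T)/\Phi(t)$ does \emph{not} collapse to the running cost alone --- by \cref{eq:def-su} it carries the stochastic term $\int_t^T\bu_s'\,d\bw_s$ as well, and that term must remain in the exponent (take $\Psi=0$ and $\bu$ constant: $\phi_t\equiv 1$, which matches $\E_{\cQ^u}[\exp(-\int_t^T\tfrac12\norm{\bu_s}^2ds-\int_t^T\bu_s'd\bw_s)]=1$ but not $\exp(-\tfrac12\norm{\bu}^2(T-t))$). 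The displayed equation \cref{eq:important-value-fn} omits this term --- consistently with how $S^u$ is actually used later, this is a typo in the statement --- so your derivation is right, but you should report the integrand with the $\bu_s'\,d\bw_s$ term rather than claim exact agreement with the printed formula.
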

\begin{proof}
    This follows importance sampling with density ratio from \cref{eq:girsanov}.
\end{proof}

{\bf Proof of \cref{thm:ess}:}
We denote the important weight by $w^u$; note it is a random variable. It follows that
\begin{equation}
    w^u = \frac{
    \exp(-\Psi(\bx_T) - \int_{0}^T(\frac{\norm{\bu}^2}{2}\rd t + \bu'\rd\bw))
}
{
    \E_{\cQ^u}[
    \exp(-\Psi(\bx_T) - \int_{0}^T(\frac{\norm{\bu}^2}{2}\rd t + \bu'\rd\bw))
    ]
} 
    = \frac{\exp(-S^u(t))}{\E_{\cQ^u}[\exp(-S^u(t))]}.
\end{equation}
Dividing the LHS of \cref{eq:lemma-main} by $\phi_0(\bx_0)$ we obtain
\begin{equation}
    \frac{\Phi(T)\phi_t(\bx_T) - \Phi(t)\phi_t(\bx_t)}{\phi_0(\bx_0)}
    = \frac{\Phi(T)\phi_t(\bx_T) - \phi_0(\bx_0)}{\E_{\cQ^u}[\exp(-S^u(0))]} = w^u - \E_{\cQ^u}[w^u].
\end{equation}

Therefore, the variance of $w^u$ equals
\begin{align}
    \E_{\cQ^u}[(w^u - \E_{\cQ^u}[w^u])^2] 
    & = \E_{\cQ^u}[(\int_{0}^T \frac{\Phi(s)\phi_s(\bx_s)}{\phi_0(\bx_0)} (\bu_s^* - \bu_s)'\rd\bw)^2] \nonumber \\
    & = \E_{\cQ^u}[\int_{0}^T \frac{\Phi^2(s)\phi^2_s(\bx_s)}{\phi^2_0(\bx_0)}
            (\bu_s^* - \bu_s)'(\bu_s^* - \bu_s)\rd s
        ] \nonumber \\
    & = \E_{\cQ^u}[
            \int_0^T (w^u\phi_s(\bx_s)\exp(S^u(s)))^2
            (\bu_s^* - \bu_s)'(\bu_s^* - \bu_s)\rd s
            ]. \label{eq:varaince-bound}
\end{align}

By Jensen's inequality
\begin{equation*}
    \phi(s,\bx_s)^2 = (\E_{\cQ^u}[\exp(-S^u(s))|\bx_s])^2 \leq \E_{\cQ^u}[\exp(-2S^u(s))|\bx_s].
\end{equation*}

Plugging the above inequality into \cref{eq:varaince-bound}, we reach the upper bound of variance
\begin{equation}
    \E_{\cQ^u}[(w^u - \E_{\cQ^u}[w^u])^2] \leq \int_0^T \E_{\cQ^u}[(\bu_s^* - \bu_s)'(\bu_s^* - \bu_s)(w^u)^2]\rd s.
\end{equation}
In view of the fact $\text{Var}(w^u) +1 = \E_{\cQ^u}[(w^u)^2]$, we arrive at
\begin{equation*}
    1 + \E_{\cQ^u}[(w^u)^2] \leq \E_{\cQ^u}[(w^u)^2] \int_{0}^T \E_{\cQ^u}[(\bu_s^* - \bu_s)'(\bu_s^* - \bu_s)]\rd s.
\end{equation*}

If we consider the near optimal policy such that $\max_{t, \bx}\norm{\bu_t(\bx) - \bu_t^*(\bx)}^2 \leq \frac{\epsilon}{T}$, then it follows that 
\begin{equation}
    \frac{1}{\E_{\cQ^u}[(w^u)^2]} \geq 1 - \epsilon.
\end{equation}

\section{Proof of \cref{thm:logz-estimation}}
\label{app:logz-estimation}
We consider the KL divergence between trajectory distribution resulted from the policy $\bu$ and the one from optimal policy $\bu^*$:
\begin{align*}
    \KL(\cQ^u(\tau) \Vert \cQ^*(\tau)) &= \KL(\cQ^u(\tau) \Vert \cQ^0(\tau) \frac{\mu(\bx_T)}{\mu^0(\bx_T)}) \\
                               &= \E_{\tau \sim \cQ^u}[\int_0^T \frac{1}{2} \norm{\bu_t}^2 \rd t + \bu_t' \rd\bw_t + \Psi(\bx_T)] \\
                               & = \E_{\tau \sim \cQ^u}[\int_0^T \frac{1}{2} \norm{\bu_t}^2 \rd t + \bu_t' \rd\bw_t + \hat{\Psi}(\bx_T) + \log Z]\\
                               & = \E_{\tau \sim \cQ^u}[\hat{S}^u(\tau) +\log Z] \\
                               & \geq 0.
\end{align*}
The last inequality is based on the fact $\KL(\cQ^u(\tau) \Vert \cQ^*(\tau)) \geq 0$ and the equality holds only when $\bu = \bu^*$, pointing to
\begin{align*}
    0 &= \E_{\tau \sim \cQ^*}[\hat{S}^u(\tau) +\log Z].
\end{align*}
Therefore, we can estimate the normalization constant by
\begin{equation}
    Z \geq \exp(- \E_{\tau \sim \cQ^u}[\hat{S}^u(\tau)] ),
    \quad
    Z = \exp(- \E_{\tau \sim \cQ^*}[\hat{S}^u(\tau)] ).
\end{equation}



Next we provide an unbiased estimation with sub-optimal policy $\bu$ based on importance sampling as
\begin{align*}
    1 & = \E_{\tau \sim \cQ^u} [\frac{\cQ^*(\tau)}{\cQ^u(\tau)}] \\
    & = \E_{\tau \sim \cQ^u} [\exp(\log \frac{\cQ^*(\tau)}{\cQ^u(\tau)})]\\
    & = \E_{\tau \sim \cQ^u} [\exp( - \hat{S}^u(\tau) - \log Z)].
\end{align*}

The last equality is based on the fact $\E_{\tau \sim \cQ^u} [\frac{\cQ^*(\tau)}{\cQ^u(\tau)}] = \int_{\tau} \cQ^*(\tau) \rd\tau = 1$. Hence, we obtain an unbiased estimation of the normalization constant as
\begin{equation*}
    Z = \E_{\tau \sim \cQ^u} [\exp( - \hat{S}^u(\tau))].
\end{equation*}

\section{Experiment details and discussions}
\label{app:exp-details}
\zqs{
\subsection{Training time, memory requirements and sampling efficiency}}

\begin{figure}
    \centering
    \includegraphics[width=\textwidth]{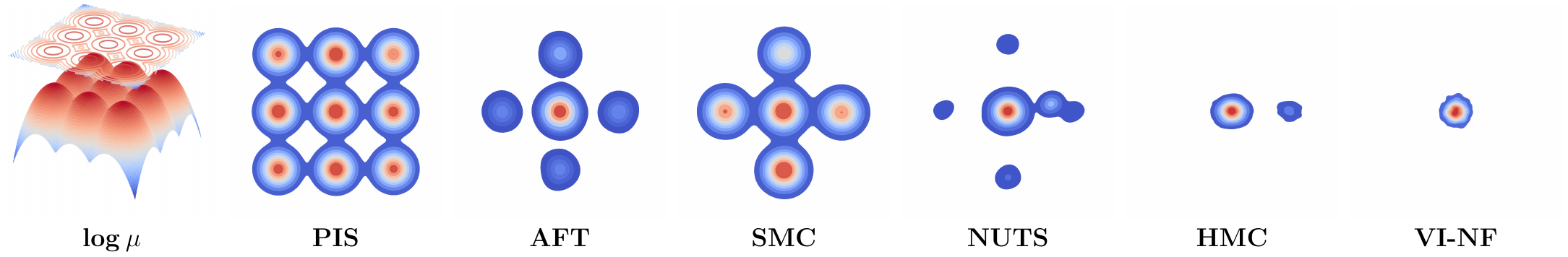}
    \caption{Sampling performance on a challenging 2D unnormalized density model with well-separated modes. Kernel density estimation plots are compared with 2k samples.
    AFT and SMC use annealing trick with 10 decreasing temperate levels and HMC kernel following~\citep{arbel2021annealed}. Even without annealing trick and resampling, Path Integral Sampler~(PIS) generates visually indistinguishable samples from target density with 100 steps.
    PIS starts $\bx_0$ from origin point while others start from a standard Gaussian. \zqs{The underlying distribution is chosen deliberately to distinguish the performance of different methods. In particular, 100 steps are not sufficient for general MCMC to converge to the stationary distribution. We also note performance of compared methods can be further improved with tuning temperature scheduling, samples initialization. Our generic algorithm can explore more modes with similar initialization and less tuning parameters.
    }}
    \label{fig:demo_mg2d}
\end{figure}

The PIS can be trained once before sampling and did not contribute to the runtime in sampling computation. Most MCMC methods do not have learnable parameters. However, PIS policy is trained once and used everywhere. Thus, the training time can be amortized when PIS policy is deployed in generating a large number of samples. The training time of PIS highly depends on efficiency of training NeuralSDEs. One future direction is to investigate additional regularizations and structured SDEs to speed up the training.
The PIS algorithm is implemented in PyTorch~\citep{paszke2019pytorch}. We use Adam optimizer~\citep{kingma2014adam} in all experiments to learn optimal policy with learning rates $5\times 10^{-3}$ and other default hyperparameters. All experiments are trained with $30$ epochs and $15000$ points datasets. Loss in most experiments plateau after 3 epochs, some even 1 epoch. Experiments are conducted using an NVIDIA A6000 GPU. Training one epoch on 2d example takes around 15 seconds for PIS-NN and 30 seconds for PIS-Grad, 1.6 minutes and 1.8 minutes respectively on Funnel~($d=10$), and 7 minutes and 9 minutes on LGCP~($d=1600$). 
We note both PIS and AFT can be trained once and used everywhere. Therefore, the training time can be amortized when the PIS policy is deployed in generating a large number of samples. We further compare empirical sampling time for various approaches in \cref{tab:sampling-time}.

\zqs{
In the high dimensional data, thanks to the efficient adjoint SDE solver, we do not need to cache the whole computational graph and the required memory is approximately the cost associated with one forward and backward pass of $\bu_t(\bx)$ network. In our experiments, the total consumed memory is around 1.5GB for the toy and Funnel example, and around 5GB for the LGCP.
}

\begin{center}
    \begin{table}[ht]
    \centering
    \begin{tabular}{p{1.9cm}p{3.9cm}p{3.9cm} }
        \hline
        method & Sampling Time & Training Time\\
        \hline
        AFT	     & 352.2ms      & 711.2ms \\
SMC	     & 110.8 ms     & $-$\\
PIS-NN	 & 16.8 ms      & 30.3 ms \\ 
PIS-Grad & 34.3 ms      & 61.2 ms \\
SNF	     & 130.6 ms     & 256.1 ms \\
        \hline 
    \end{tabular}
    \caption{Sampling and training efficiency comparison. For sampling, we measure the consumed time for generating $2000$ particles with each method for 100 times in lower dimensional data, $2$-D points datasets, and report average time for each method. We also include one batch training time for AFT and PIS.}
    \label{tab:sampling-time}
    \end{table}
\end{center}

\zqs{
\subsection{Network initialization}
}
\zqs{
In most experiments in the paper, we found PIS with the default setting, $T=1$ and zero control initialization, gives reasonable performance. One exception is LGCP, where we found training with $T=1$ sometimes suffers from numerical issues and gives NAN loss. We sweep $T=1,2,5$ and found $T=5$ gives encouraging results. As we discussed in \cref{app:tech_tips}, the optimal policy $\bu^*$ depends on $T$ and large $T$ not only results in a large cover area but also a ``smoother" $\bu^*$. For neural network weight initialization, we use the zero initialization for the last layer of parameterized policy $\bu_t(\bx)$ and other weights follow default initialization in PyTorch. 
}
\zqs{
We note there is no guarantee that PIS can cover all modes initially with only the given unnormalized distribution. However, the initialization problem exists for most sampling algorithms and MCMC algorithms suffer longer mixing times compared with the ones with proper initialization~\citep{chopin2020introduction}. PIS also suffers from improper initialization and we report some failure mode in \cref{app:tech_tips}.
}

\subsection{Details for compared methods}
For all trained PIS and its variants, we use uniform 100 time-discretization steps for the SDEs. Gradient clipping with value 1 is used. A Fourier feature augmentation~\citep{tancik2020fourier} is employed for time condition. Throughout all experiments, we use the same network with the modified first layer and last layer for different input data shape. 
In one pass $\bu_t(\bx)$, we augmented scalar $t$ with Fourier feature to $128$ dimension, followed by a 2 linear layers to extract $64$ dimension signal feature. 
We use 2 layer linear layers to extract $64$ dimension feature for $x$ and the concatenate $x,t$ features before feeding into another 3 linear layer with $64$ hidden neurons to get $\bu_t(\bx)$.
We found that the training is stable with our simple MLP parametrization in our experiments.

For HMC, we use 10 iterations of Hamiltonian Monte Carlo with 10 leapfrog steps per iterations, totaling 100 leapfrog steps. 
For NUTS, we set the maximum depth of the tree built as 5. Note that samples of HMC and NUTS used in our experiments are from separate trajectories instead of from one trajectory at different timestamps. We observed that the latter is more likely to generate samples that concentrate on one single mode. 
For SMC and AFT, we use 10 transitions with each transition using the same amount computation as HMC. The settings of SMC and AFT follow the official implementation~\citep{arbel2021annealed} in the released codebase~\footnote{\url{https://github.com/deepmind/annealed_flow_transport}}. \zqs{In the Alanine Dipeptide experiments, for AFT we sweep adam learning rate $lr=1\times 10^{-4},5\times 10^{-4},1\times 10^{-3},5\times 10^{-3},1\times 10^{-2}$ and select $5\times 10^{-3}$. Other settings follow the default setup from the official codebase.}

\zqs{\subsection{Comparison with SVGD algorithm}}
\label{app:exp-details-stein}

In this section, we present some comparison between celebrated SVGD sampling algorithm ~\citep{liu2016kernelized, liu2019understanding} and PIS sampler. SVGD is based on collections of interacting particle, and we found its performance increase with more samples in a batch as it shown in \cref{fig:svgd-batch}. Even with $5000$ particles in a bach, sample quality of SVGD is still worse than PIS-Grad. Meanwhile, the kernel-based approach pays much more computation results as the number of active particles increases as we show in \cref{tab:sampling-time-svgd} compared with PIS. In this perspective, PIS enjoy much better scalibility. 

\begin{figure}
    \centering
    \includegraphics[width=\textwidth]{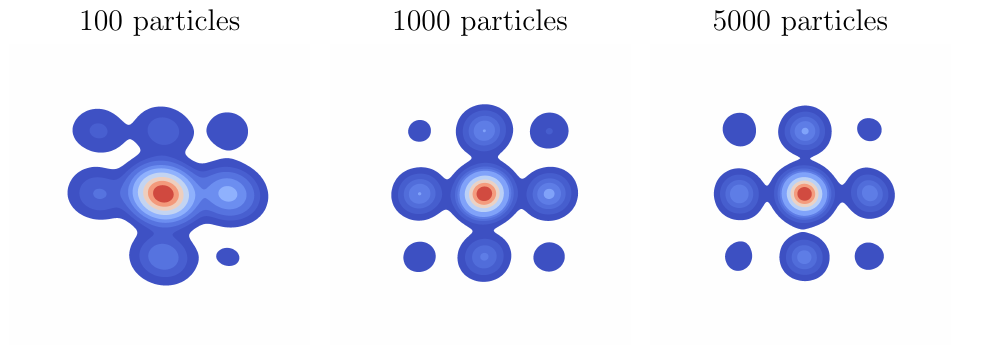}
    \caption{Generated samples from SVGD~\citep{liu2016stein} with 100 steps. We generated samples with batch size 100, 1000, 5000. We find with more particles, samples generated are more closed to the ground truth data.}
    \label{fig:svgd-batch}
\end{figure}

\begin{center}
    \begin{table}[ht]
    \centering
    \begin{tabular}{p{3.9cm}p{3.9cm} }
        \hline
        method & Time~(mean $\pm$ std)\\
        \hline
        SVGD 100 particles	&       19.3 ms $\pm$ 1.2 ms \\
        SVGD 1000 particles	&       29.4 ms $\pm$ 1.8 ms \\ 
        SVGD 5000 particles	&       434 ms $\pm$ 2.43 ms \\
        PIS-NN 5000 particles& 	    223 µs $\pm$ 14.2 µs \\
        PIS-Grad 5000 particles&	412 µs $\pm$ 15.1 µs \\
        \hline 
    \end{tabular}
    \caption{Consumed time for one iteration of SVGD and one step in PIS in 2D points example. Sampling with SVGD is less efficient compared with PIS.}
    \label{tab:sampling-time-svgd}
    \end{table}
\end{center}

\subsection{Choice of prior SDE:}
\label{apps:bff-bg}
We can use a more general SDE 
\begin{equation}\label{eq:g-sde}
    d\bx_t = \bff(t,\bx_t)\rd t + \bg(t, \bx_t)(\bu_t \rd t + \rd\bw_t),~~ \bx_0 \sim \nu
\end{equation}
instead of \cref{eq:sde} for sampling.
As discuss in \cref{ssec:pis}, we prefer use a linear function for $\bff, \bg$ to promise a closed-form $\mu^0$. The choice of $\bff, \bg$ encodes prior knowledge into dynamics without control and $\cQ^*$ is determined based on the prior $\cQ^0$. Intuitively, the ideal $\cQ^0$ should drive particles from $\nu$ to $\cQ^0(\bx_T)$ that is close to $\mu$.
In PIS, our training objective is to fit $\cQ^*$ with parameterized $\cQ^u$. Thus training can be easier and faster if $\cQ^*$ and $\cQ^0$ are close since we use zero control as initialization for training policy. However, there is no general approach to choose $\bff, \bg$ such that $\cQ^0(\bx_T)$ is close to $\mu$ and $\cQ^0(\bx_T)$ has a closed form. In this work, we adopt the general form with $\bff=0, \bg = \mathbf{I}$. It would be interesting to explore other prior dynamics or data-variant $\bff,\bg$ in the future work, e.g., underdamped Langevin.

\subsection{Estimation of normalization constants}

As discussed in~\citet{chopin2020introduction}, normalization constants estimation of SMC and its variants AFT can be achieved with incremental importance sampling weights.

In our experiments we treat HMC and NUTS as special cases of SMC with only two different temperature levels. One corresponds to a standard Gaussian distribution and the other one corresponds to the target density. Since the initial distribution $\nu$ for SMC and NUTs is chosen as standard Gaussian, we can omit the MCMC steps for it and the total computation efforts required for the specific SMC are for the transitions in HMC and NUTS.

For VI-NF, we use importance sampling 
\begin{equation*}
    \int \hat\mu(\bx) \rd\bx = \int q(\bx) \frac{\hat\mu (\bx)}{q(\bx)} \rd\bx = \E_{q}[\frac{\hat\mu (\bx)}{q(\bx)}],
\end{equation*}
where $q$ is the normalized distribution represented by normalizing flows, to provide an unbiased estimation of normalization constants. We use the ELBO in \cref{eq:logz-estimation} for PIS and the unbiased estimation \cref{eq:logz-unbiased} for PIS$_{RW}$.

\subsection{2 Dimensional Rings example}
The ring-shape density function
\begin{equation*}
    \log \hat{\mu} = - \frac{\min((\norm{\bx}-1)^2, (\norm{\bx}-3)^2, (\norm{\bx}-5)^2)}{100}.
\end{equation*}
Consider the special case of gradient informed SDE, which can be viewed as PIS-Grad with a specific group of parameters,
\begin{equation*}
    \rd\bx_t = \nabla\log \hat{\mu}(\bx_t)\rd t + \sqrt{2}\rd\bw_t.
\end{equation*}
This is exactly the Langevin dynamics used widely in sampling~\citep{mackay2003information}. As a special case of MCMC, Langevin sampling can generate high quality samples given large enough time interval~\citep{mackay2003information}. From this perspective, PIS-Grad can be viewed as a modulated Langevin dynamics that is adjusted and represented by neural networks.

\subsection{Benchmarking datasets}
\begin{center}
    \begin{table}[ht]
    \centering
    \begin{tabular}{p{1.9cm}p{1.0cm}p{0.9cm}p{0.9cm} p{1.0cm}p{0.9cm}p{0.9cm} ccc}
        \hline
        &\multicolumn{3}{c}{MG(d=2)}&\multicolumn{3}{c}{Funnel(d=10)}&\multicolumn{3}{c}{LGCP(d=1600)}\\
        &B&${\text{S}}$&$A$&B&${\text{S}}$&$A$&B&${\text{S}}$&$A$\\
        \hline
        AFT-$10^3$ & -0.509 &  0.24 & 0.562 &    -0.249 &    0.0758 &     0.261 &   -3.08 &    1.59 &    3.46 \\
        SMC-$10^3$ & -0.362 & 0.293 & 0.466 &    -0.338 &     0.136 &     0.364 &    -440 &    14.7 &     441 \\
        AFT-$2\times10^3$ & -0.371 & 0.477 & 0.604 &    -0.249 &    0.0758 &     0.261 &   -1.23 &   0.826 &    1.48 \\
        SMC-$2\times10^3$ & -0.398 & 0.198 & 0.444 &    -0.338 &     0.136 &     0.364 &    -197 &    5.21 &     197 \\
        AFT-$3\times10^3$ &  -0.316 & 0.365 & 0.483 &    -0.281 &    0.0839 &     0.293 &   -1.05 &   0.514 &    1.17 \\
        SMC-$3\times10^3$ & -0.137 &  0.62 & 0.635 &    -0.323 &     0.064 &     0.329 &    -109 &    5.58 &     109 \\
        AFT-$5\times10^3$ & -0.194 & 0.319 & 0.373 &    -0.253 &    0.0397 &     0.256 &  -0.949 &   0.439 &    1.05 \\
        SMC-$5\times10^3$ & -0.129 & 0.246 & 0.278 &    -0.298 &    0.0564 &     0.303 &   -37.5 &    5.04 &    37.8 \\
        AFT-$10^4$ &  -0.03 & 0.515 & 0.515 &    -0.194 &    0.0554 &     0.202 &  \textbf{-0.827} &   \textbf{0.356} &   \textbf{0.901} \\
        SMC-$10^4$ & -0.171 & 0.446 & 0.477 &    -0.239 &    0.0412 &     0.243 &   -6.47 &    1.95 &    6.76 \\
        \hline
        PIS-$10^2$ & \textbf{-0.021} & \textbf{0.03} & \textbf{0.037} & \textbf{-0.008} & \textbf{9e-3} & \textbf{0.012} &      -1.94 &    0.91 &      2.14\\
        \hline
    \end{tabular}
    \caption{Long-run MCMC on mode separated mixture of Gaussian~(MG), Funel distribution and Log Gaussian Cox Process~(LGCP) for estimating log normalization constants. The suffix denotes the total number of discrete-time steps for each method, which equals the number of layers multiply steps per layer. We experiments $10,20,30,50,100$ layers for annealing and 100 leapfrog steps per layer. As the number of steps increases, the performance of AFT and SMC gradually improves. 
    PIS denotes the PIS$_{RW}$-Grad.
    {\it B} and {\it S} stand for estimation bias and standard deviation among 100 runs and {\it A$^2$}= {\it B$^2$} + {\it S$^2$}.}
    \label{fig:long-run}
    \end{table}
\end{center}

For mixture of Gaussian, we choose nine centers over the grid $\{-5,0,5\} \times \{-5,0,5\}$, and each Gaussian has variance $0.3$. The small variance is selected deliberately to distinguish the performance of the different methods. We use $2000$ samples for estimating the log normalization constant $Z$. We use the standard MLP network to parameterize the control drift $\bu_t(\bx)$, where the time signal is augmented by Fourier feature using $64$ different frequencies. We use 2 layer (64 hidden neurons in each layer) MLP to extract features from the augmented time signal and $\bx$ separately, and another 2 layer MLP to map the summation of features to the policy command.
We note that all these methods for comparison, including HMC, NUTS, SMC, AFT, can reach reasonably good results given large enough iterations. However, with small finite number of steps, PIS achieves the best performance. We include more results for long-run MCMC methods in~\cref{fig:long-run}.

In the experiment with Funnel distribution, \citet{arbel2021annealed} suggests to use a slice sampler kernel for AFT and SMC, which includes 1000 steps of slice sampling per temperature.
In \cref{tab:bench}, we still use HMC for comparing performance with the same number of integral steps. We also include the results with slice sampler in \cref{tab:slice-sample}.
We use 6000 particles for the estimation of log normalization constants. The network architecture of PIS is exactly the same as that in the experiments with mixture of Gaussian.
\begin{center}
    \begin{table}[ht]
    \centering
    \begin{tabular}{p{1.9cm}p{1.9cm}p{1.9cm}p{1.9cm} }
        \hline
        &\multicolumn{3}{c}{Funnel(d=10)}\\
        &B&${\text{S}}$&$A$\\
        \hline
         AFT-10 &     0.128 &     0.376 &     0.398 \\
         SMC-10 &    -0.193 &     0.067 &     0.204 \\
         AFT-20 &    0.0134 &     0.173 &     0.174 \\
         SMC-20 &    -0.113 &    0.0878 &     0.143 \\
         AFT-30 &     0.074 &     0.309 &     0.318 \\
         SMC-30 &  \textbf{-0.006} &     0.188 &     0.188 \\
         \hline
         PIS$_{RW}$-Grad &  -0.008 & \textbf{0.009} & \textbf{0.012}\\
         \hline 
    \end{tabular}
    \caption{AFT and SMC with slice sampler kernel. The suffix denotes the number of temperature levels for annealing. 1000 slicing sampling steps are used for each temperature. 
    Though there is no annealing and only 100 steps are used, the performance of PIS is competitive.}
    \label{tab:slice-sample}
    \end{table}
\end{center}

In the example with Cox process, the covariance $K$ is chosen as 
\[
    K(u,v) = 1.91 \times \exp ( -\frac{\norm{u -v }}{M \beta}),
\]
and the mean vector equals $\log(126) - \sigma^2$ and $\alpha=1 / M^2$. We note that this setting follows~\citet{arbel2021annealed, moller1998log}. Totally 2000 samples are used to evaluate the log normalization constant. We treat the mean of estimation results from 100 repetitions of SMC with 1000 temperatures as ground truth normalization constants. In this experiment, we found clipping gradient from target density function help stabilize and speed up the training of PIS-Grad. This example is the most challenging task among the three. One major reason is the high dimensionality of the task; the PIS needs to find optimal policy $\bu: (t,~ \R^d) \rightarrow \R^d$ in high dimensional space. In addition, there is no prior information that can be used to shrink the search space, which makes the training of PIS with MLP more difficult. We use 2000 particles for estimation of log normalization constants. We also include more experiment results in \cref{tab:cox}.

\begin{center}
    \begin{table}[ht]
    \centering
    \begin{tabular}{p{3.9cm}p{1.9cm}p{1.9cm}p{1.9cm} }
        \hline
        &\multicolumn{3}{c}{LGCP(d=1600)}\\
        &B&${\text{S}}$&$A$\\
        \hline
         AFT-$10^4$ & \textbf{-0.827} &   0.356 &   0.901 \\
         \hline
         PIS$_{RW}$-Grad-$1\times 10^2$  &  -1.94 &    0.91 &      2.14\\
         PIS$_{RW}$-Grad-$5\times 10^2$  &  -1.25 &     0.57 &      1.373\\
         PIS$_{RW}$-Grad-$10\times 10^2$ &  -0.832 &    \textbf{0.214} &      \textbf{0.859}\\
         \hline 
    \end{tabular}
    \caption{PIS with large number of integral step. The suffix number is the total integral steps. For AFT-$10^4$, we use 100 annealing layers and run 100 leapfrog steps per each annealing layer.}
    \label{tab:cox}
    \end{table}
\end{center}

\subsection{Alanine dipeptide}

The setup for target density distribution and the comparison method are adopted from \citet{WuKohNoe20}. Following~\citet{noe2019boltzmann}, an invertible transformation between Cartesian coordinates and the internal coordinates is deployed before output the final samples. Then we normalize the coordinates by removing means and dividing them by the standard deviation of train data. To setup the target distribution, we simulate Alanine dipeptide in vacuum using OpenMMTools~\citep{eastman2017openmm}~\footnote{\url{https://github.com/choderalab/openmmtools}}. Total $10^5$ atoms data points are generated as training data. Situation parameters, including time-step and temperature setting are the same as~\citet{WuKohNoe20}. We refer the reader to official codebase for more details of setting target density function~\footnote{\url{https://github.com/noegroup/stochastic_normalizing_flows}}.

Following the setup in \citet{WuKohNoe20}, we use unweighted samples to compute the metrics. KL divergence of VI-NF on $\mu$ is calculated based on ELBO instead of importance sampling as in normalizing constants tasks.
We use Metropolis random walk MCMC block for SMC, SNF and AFT and RealNVP blocks for SNF and AFT~\citep{dinh2016density}. For a fair comparison, we use PIS-NN instead of PIS-Grad since none of the approaches in this example uses the gradient information. We note that SNF is originally trained with maximizing data likelihood where some empirical samples are assumed to be available. We modify the training objective function by reversing the original KL divergence as in~\cref{eq:kl-bound}. 

\subsection{More details on sampling in Variational Autoencoder latent space}
\begin{figure}[!htp]
    \centering
    \includegraphics{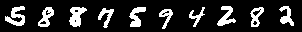}
    \includegraphics{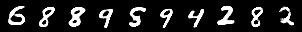}
    \caption{Origin data images and their reconstructions from trained vanilla VAE. It can be seen that reconstruction images are smoother compared with the original images.}
    \label{fig:vanilla-vae}
\end{figure}

We use a vanilla VAE architecture to train on binary MNIST data. The encoder uses a standard 3 layer MLP networks with 1024 hidden neurons, and maps an image to the mean and standard deviation of 50 dimension diagonal Normal distribution. The decoder employs  3 layer MLP networks to decode images from latent states. ReLU nonlinearity is used for hidden layers. For training, we use the Adam optimizer with learning rate $5\times 10^{-4}$, batch size 128. With reparameterization trick~\citep{KinWel13} and closed-form KL divergence between approximated normal distribution and standard normalization distribution, we train networks for totally 100 epochs. We show performance of vanilla in \cref{fig:vanilla-vae}.

We parameterize distribution of decoder $p_{\theta}(\bx | \bz)$ as
\begin{equation*}
    \log p_{\theta}(\bx | \bz) = \log p(\bx| D_{\theta}(\bz)) = \bx \log D_{\theta}(\bz) + (1-\bx)\log (1-D_{\theta}(\bz)).
\end{equation*}

For PIS, we use the same network and training protocol as that in the experiment for mixture of Gaussian and Funnel distributions. We also use gradient clip to prevent the magnitude of control drift from being too large. 


\section{Technical details and failure modes}
\label{app:tech_tips}

\subsection{Tips}
\label{app:tech_tips_read}
Here we provide a list of observations and failure cases we encountered when we trained PIS. We found such evidences through some experiments, though there is no way we are certain the following claims are correct and general for different target densities.
\begin{itemize}
    \item We notice that smaller $T$ may result in control strategy with large Lipchastiz constants, which is within expectation since large control is required to drive particles to destination with less amount of time. It is reported that it is more difficult to approximate large Lipchastiz functions with neural networks~\citep{jacot2018neural, tancik2020fourier}. We thus recommend to increase $T$ or constraint the magnitude of $\bu$ to stablize training when encountering numeric issue or when results are not satisfactory.
    \item We found batch normalization can help stablize and speed up training, and the choice of nonlinear activation (ReLU and its variants) does not make much difference.
    \item We also notice that if the control $\bu_t( \bx)$ has large Lipchastiz constants in time dimension, the discretized error would also increase. For calculating the weights based on path integral, we suggest to decrease time stepsize and increase $N$ when the number of integral steps is small and discretization error is high.
    \item We obtained more stable and smaller training loss when training with Tweedie's formula~\citep{efron2011tweedie}, but we found no obvious improvements on testing the trained sampler or estimating normalization constants. 
    \item Regardless the accuracy and memory advantages of Reversible Heun claimed by {\it torchsde}~\citep{li2020scalable, kidger2021efficient}
    , we found this integration approach is less stable compared with simple Euler integration without adjoint and results in numerical issues occasionally. 
    \zqs{We empirically found that methods without adjoint are more stable and lower loss compared with adjoint ones, even in low dimensional data like 2d points.}
    \zqs{We use Euler-Maruyama without adjoint for low dimension data and Reversible Heun method~\citep{kidger2021efficient} in datasets when required memory is overwhelming to cache the whole computational graph.}
    We recommend readers to use Ito Euler integration when memory permits or conduct training with a small $\Delta t$.
    \zqs{The more steps in PIS and smaller $\Delta t$ not only reduce policy discretization error \cref{thm:nona-bound}, but also reduce the Euler-Maruyama SDE integration errors.}
\end{itemize}

\zqs{\subsection{failure modes}}
\label{app:failure-modes}
In this section, we show some failure modes of PIS. With an improper initialization and an extremely small $T$, PIS suffers from missing mode. The failure can be resulted from following factors. First, the untrained and initial $p(\bx_T)= N(0, T\mathbf{I})$ may be far away from the target the distribution modes. Thus it is extremely difficult for training PIS to cover region of high probability mass under $\mu$. Second, small $T$ results in policies with large Lipchastic constants that are challenging to train networks as we discussed in \cref{app:tech_tips_read}. Third, the policy with limited representation power tends to miss modes due to the proposed KL training scheme. We show some uncurated samples trained with different $T$ and other failure cases.

\begin{figure}
    \centering
    \includegraphics[width=\textwidth]{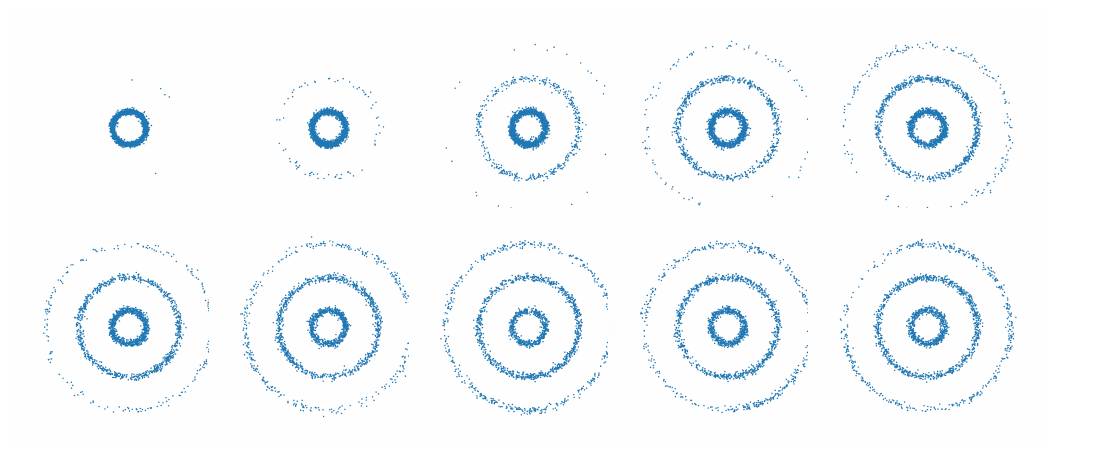}
    \caption{Generated 5000 uncurated samples with $T=0.1,0.2,0.4,0.6,0.8,1.0,2.0,3.0,4.0,5.0$. PIS with small T may miss some modes.}
\end{figure}

Thoughout our experiments, we also find that large $T$ leads to more stable training and PIS sampler covers more modes. However large $T$ with large $\delta t$ deteriorates sample quality due to discretization error but it can be eased with increasing number of steps.

We note PIS is not a perfect sampler and failure modes exists for all compared methods, experiments results in \cref{sec:exp} we reported are based on uncurated experiments.

\begin{figure}[!tbp]
  \centering
  \subfloat[$T=10,\Delta t=0.1,N=100$]{\includegraphics[width=0.32\textwidth]{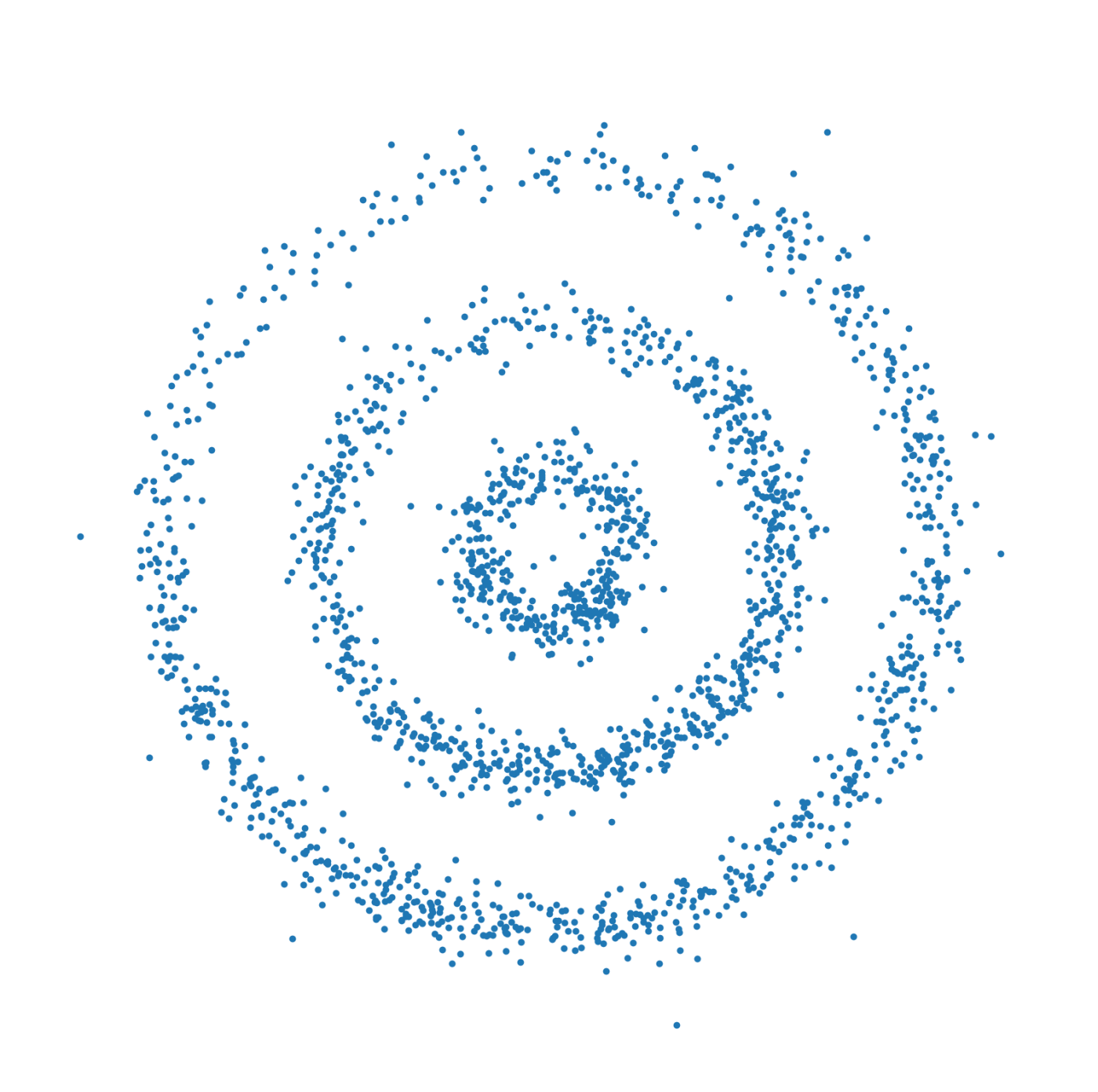}\label{fig:f1}}
  \subfloat[$T=10,\Delta t=0.01,N=10^3$]{\includegraphics[width=0.32\textwidth]{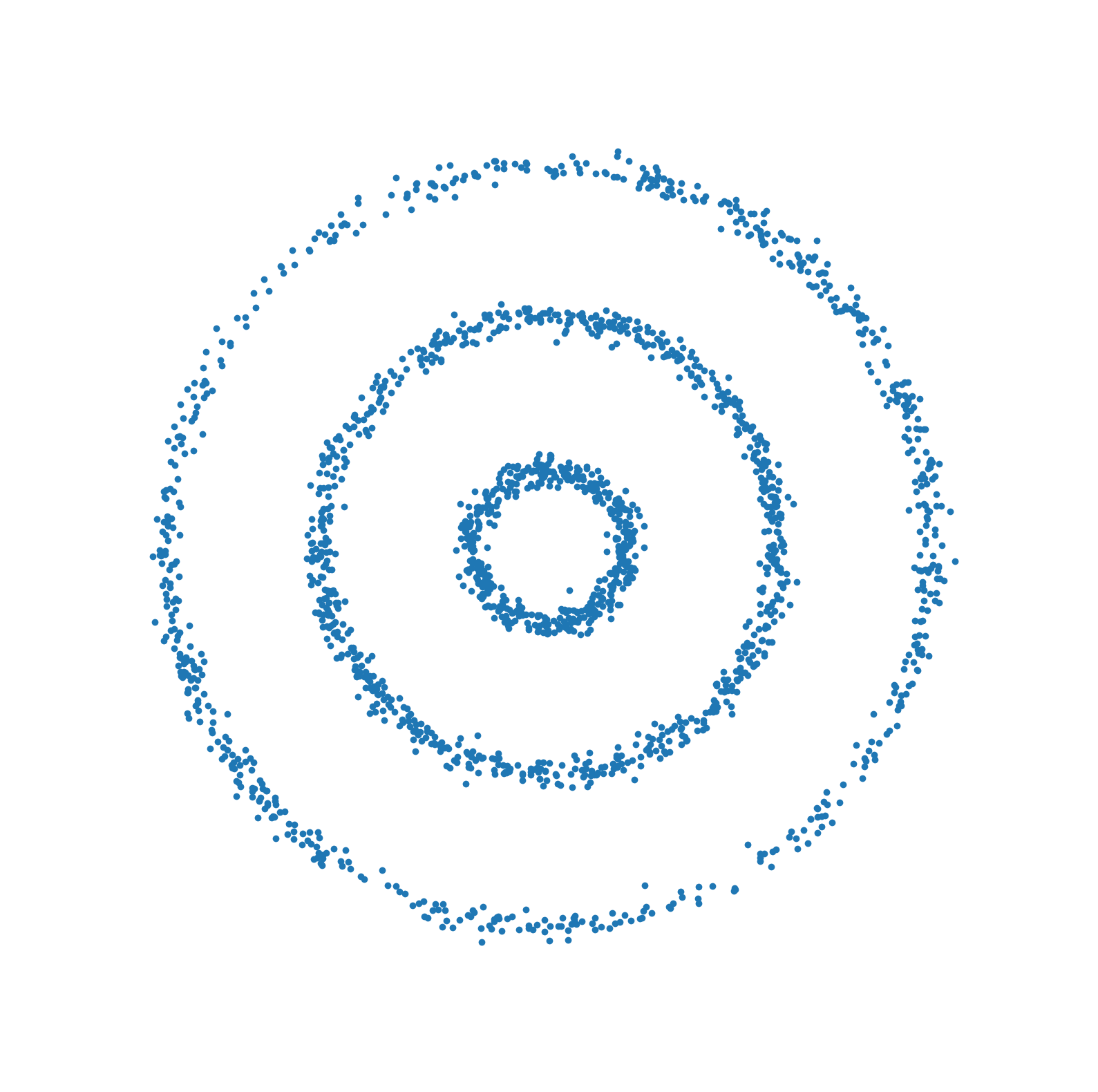}\label{fig:f2}}
  \caption{Large $T$ with large $\delta t$ deteriorates sample quality due to discretization error in \cref{fig:f1} but it can be eased with increasing number of steps in \cref{fig:f2}.}
\end{figure}

\begin{figure}
    \centering
    \includegraphics[width=0.32\textwidth]{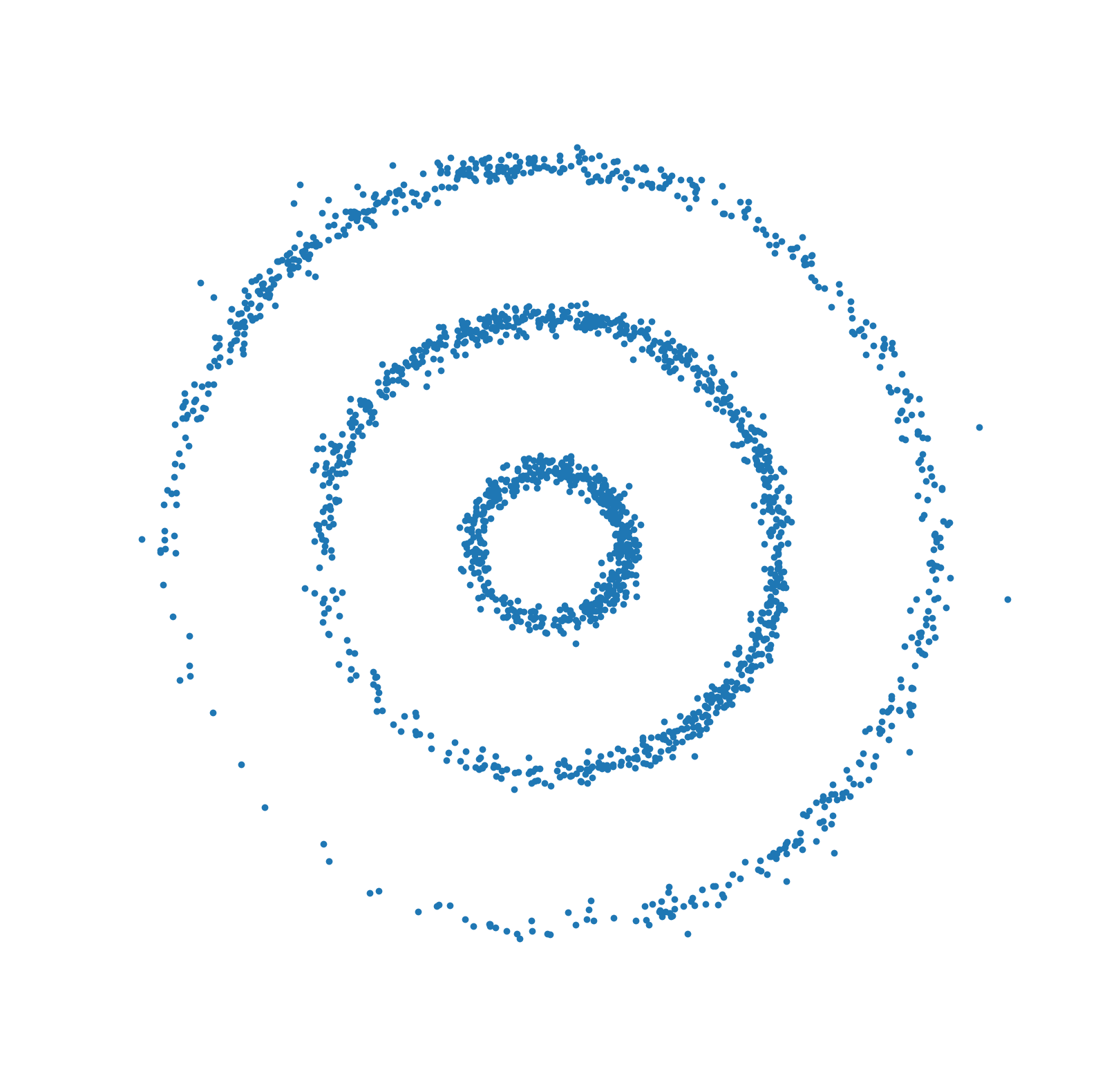}
    \caption{Another failure case with $T=2,\Delta t=0.02,N=100$ due to randomness of training networks. }
\end{figure}

\end{document}